%% file: example_paper.tex
\documentclass{article}

\usepackage[table]{xcolor}

\usepackage{microtype}
\usepackage{graphicx}
\usepackage{subcaption}
\usepackage{booktabs} 
\usepackage{multirow}   
\usepackage{array}      
\usepackage{arydshln}   
\usepackage{placeins}
\usepackage{enumitem}

\usepackage{pifont}
\newcommand{\cmark}{\ding{51}}%
\newcommand{\xmark}{\ding{55}}%

\usepackage{hyperref}

\usepackage[accepted]{icml2026}

\usepackage{amsmath}
\usepackage{amssymb}
\usepackage{mathtools}
\usepackage{amsthm}
\usepackage{subcaption}

\usepackage[capitalize,noabbrev]{cleveref}

\theoremstyle{plain}
\newtheorem{theorem}{Theorem}[section]
\newtheorem{proposition}[theorem]{Proposition}

\theoremstyle{definition}

\theoremstyle{remark}

\definecolor{lightgrey}{RGB}{242, 242, 242}
\definecolor{average}{RGB}{240, 204, 126}
\definecolor{last}{RGB}{225, 191, 192}
\definecolor{transfer}{RGB}{191, 191, 225}
\definecolor{diag}{RGB}{151, 205, 112}

\usepackage[textsize=tiny]{todonotes}

\icmltitlerunning{Spectral Imbalance Causes Forgetting in Low-Rank Continual Adaptation}

\begin{document}

\twocolumn[
  \icmltitle{Spectral Imbalance Causes Forgetting in Low-Rank Continual Adaptation}



  \icmlsetsymbol{equal}{*}

  \begin{icmlauthorlist}
    \icmlauthor{Hao Gu}{equal,seu,lab}
    \icmlauthor{Mao-Lin Luo}{equal,seu,lab}
    \icmlauthor{Zi-Hao Zhou}{seu,lab}
    \icmlauthor{Han-Chen Zhang}{seu,lab}
    \icmlauthor{Min-Ling Zhang}{seu,lab}
    \icmlauthor{Tong Wei}{seu,lab}
  \end{icmlauthorlist}

  \icmlaffiliation{seu}{School of Computer Science and Engineering, Southeast University, Nanjing 210096, China}
  \icmlaffiliation{lab}{Key Laboratory of Computer Network and Information Integration (Southeast University), Ministry of Education, China}

  \icmlcorrespondingauthor{Min-Ling Zhang}{zhangml@seu.edu.cn}
  \icmlcorrespondingauthor{Tong Wei}{weit@seu.edu.cn}  

  \icmlkeywords{Machine Learning, ICML}

  \vskip 0.3in
]



\printAffiliationsAndNotice{
  \icmlEqualContribution\\
  This work was done when Hao Gu was interning at SEU.
}

\begin{abstract}
Parameter-efficient continual learning aims to adapt pre-trained models to sequential tasks without forgetting previously acquired knowledge. Most existing approaches treat continual learning as avoiding interference with past updates, rather than considering what properties make the current task-specific update naturally preserve previously acquired knowledge.
From a knowledge-decomposition perspective, we observe that low-rank adaptations exhibit highly imbalanced singular value spectra: a few dominant components absorb most of the adaptation energy, thereby (i) more likely to disrupt previously acquired knowledge and (ii) making the update more vulnerable to interference from subsequent tasks.
To enable explicit balance among components, we decouple the \emph{magnitude} of the task update from its \emph{directional structure} and formulate it as a constrained optimization problem on a restricted Stiefel manifold.
We address this problem using a projected first-order method compatible with standard deep-learning optimizers used in vision-language models.
Our method mitigates both backward and forward forgetting, consistently outperforming continual learning baselines. The implementation code is available at \url{https://github.com/haodotgu/EBLoRA}.


\end{abstract}

\input{introduction.tex}

\input{observation.tex}

\input{method.tex}

\input{experiment.tex}

\input{related_work.tex}

\input{conclusion}

\section*{Acknowledgements}
This work was supported by the National Science Foundation of China (62576092, 62225602), the Basic Research Program of Jiangsu (BK20253021), and the Big Data Computing Center of Southeast University. We would also like to sincerely thank the anonymous reviewers for their constructive suggestions.

\section*{Impact Statement}
This paper presents work whose goal is to advance the field of machine learning. There are many potential societal consequences of our work, none of which we feel must be specifically highlighted here.

\bibliography{example_paper}
\bibliographystyle{icml2026}

\input{appendix}

\end{document}

%% file: introduction.tex
\section{Introduction}





\begin{figure}[t]
    \centering
    \includegraphics[width=\linewidth]{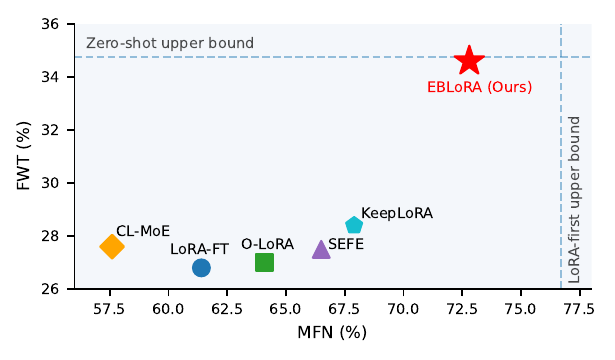}
    \caption{
    Comparison of parameter-efficient methods on the UCIT benchmark in terms of MFN and FWT. MFN measures the model’s final performance after learning all tasks, whereas FWT reflects the ability to generalize learned knowledge to unseen tasks. Zero-shot upper bound is the mean accuracy of the base model, while LoRA-first upper bound is the mean accuracy of LoRA on each task immediately after it is learned. EBLoRA clearly outperforms all baselines in MFN and FWT, achieving performance close to both upper bounds.
    }
    \label{fig:eblora-scatter}
\end{figure}

\begin{figure*}[t]
    \centering
    \begin{subfigure}{0.48\linewidth}
        \centering
        \includegraphics[width=\linewidth]{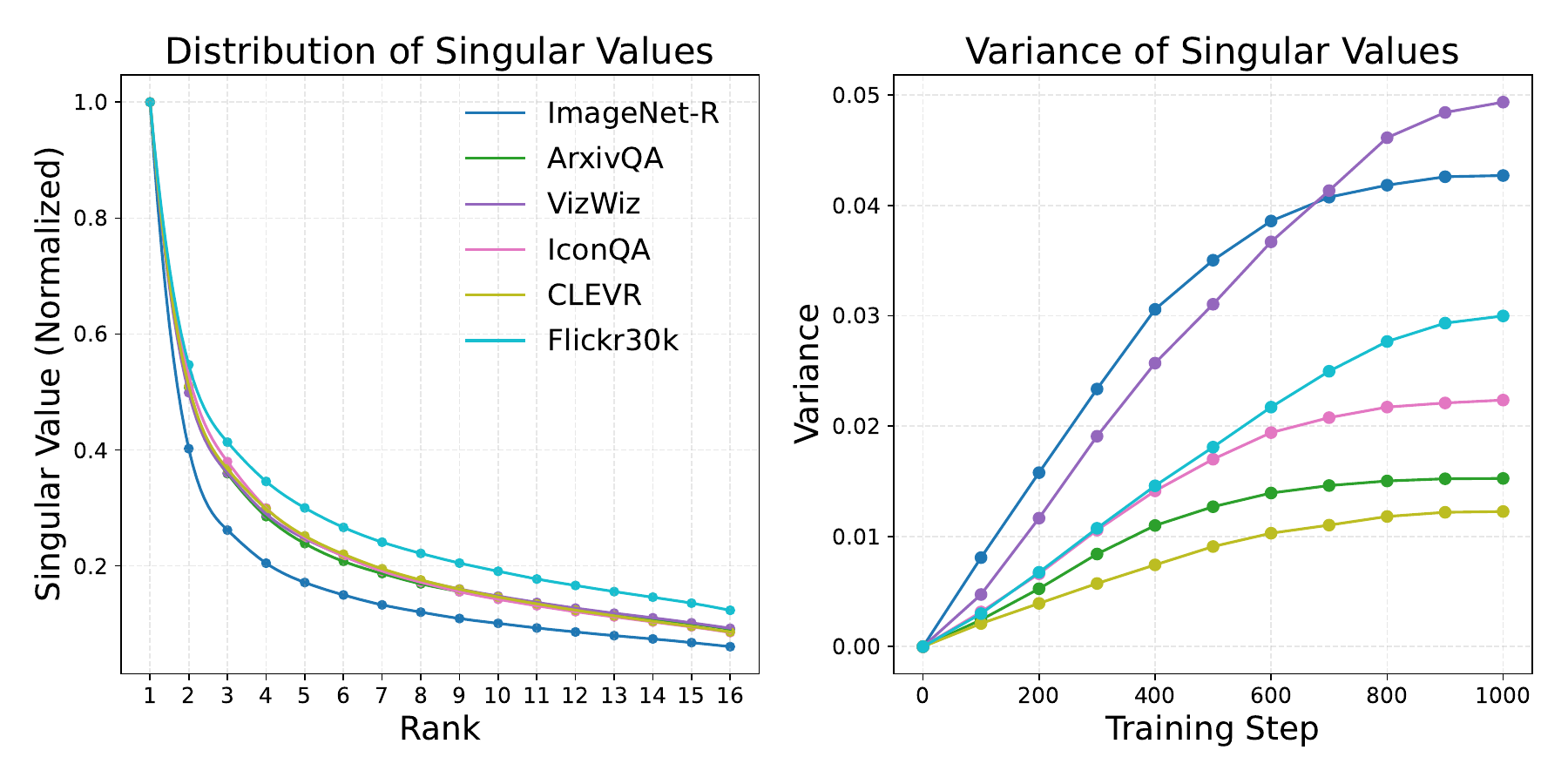}
        \caption{Singular value distribution and its variance over training}
        \label{fig:1a}
    \end{subfigure}
    \hfill
    \begin{subfigure}{0.48\linewidth}
        \centering
        \includegraphics[width=\linewidth]{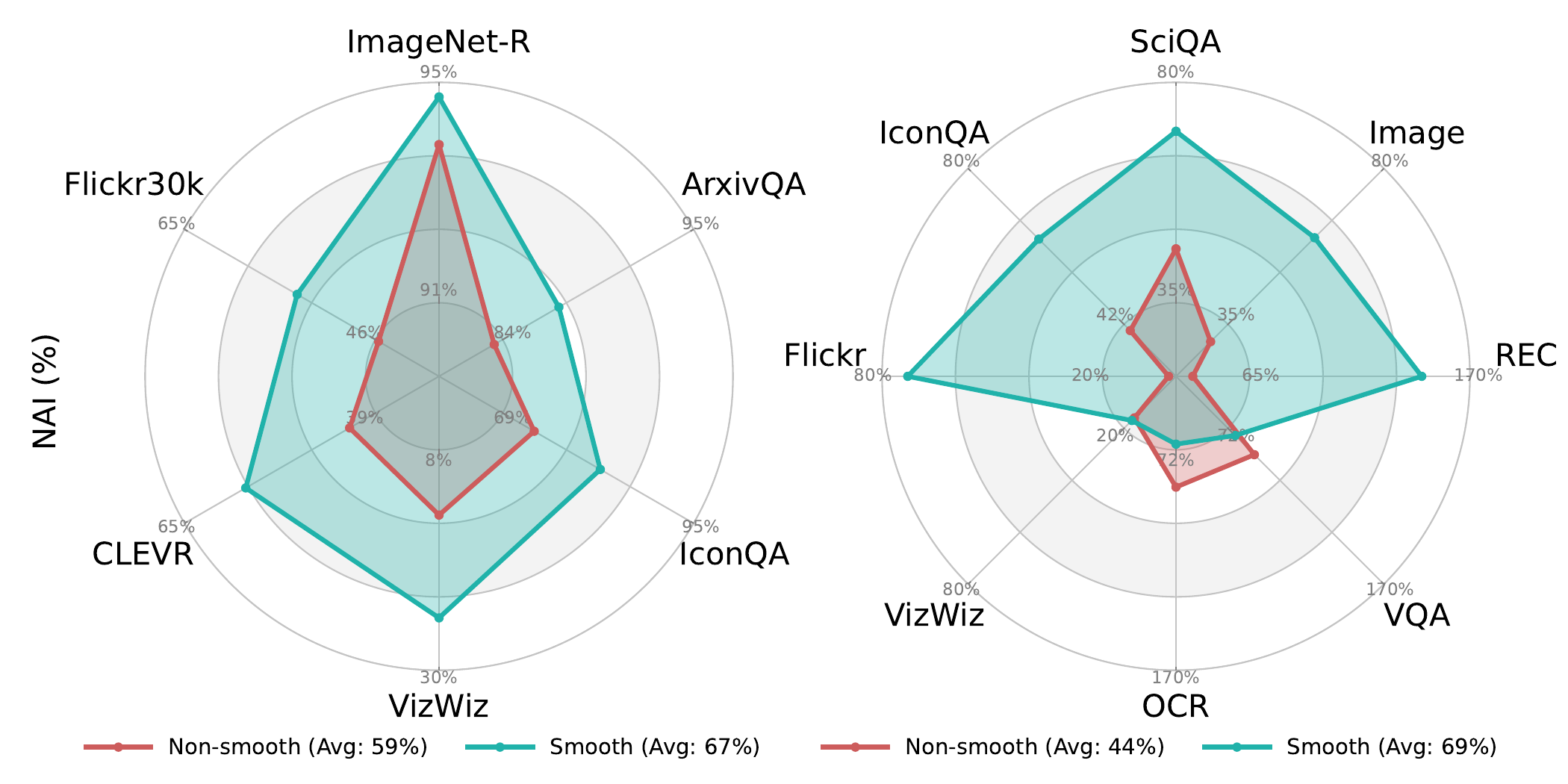}
        \caption{Multi-task merging performance (NAI) w/ vs.\ w/o smoothing}
        \label{fig:1b}
    \end{subfigure}
    \caption{
    Imbalanced low-rank components amplify cross-task interference.
    (a) LoRA updates exhibit long-tailed singular value spectra where a few components dominate adaptation energy, while variance increases during training. 
    (b) Multi-task LoRA merging reveals that direct merging degrades performance, while singular value smoothing can reduce interference and improve task coexistence. 
    }
\end{figure*}

Vision-language models (VLMs) have demonstrated remarkable capabilities in a wide range of tasks, making them cornerstones of many downstream applications~\citep{comanici2025gemini, achiam2023gpt, radford2021learning}. As real-world data evolves over time, there is an increasing need for these models to adapt to new knowledge, motivating continual learning (CL).
CL methods aim to train a single model on a sequence of tasks while preserving both the pre-trained knowledge and the knowledge acquired from earlier tasks~\citep{mukhoti2024finetuning, zheng2023preventing}.
In a typical setting, the model is trained on tasks that arrive sequentially with limited or no access to past data, making it challenging to prevent interference when adapting to new tasks~\citep{PARISI201954, ijcai2024p924}.
The interference can corrupt learned knowledge and substantially degrade the performance of previous tasks~\citep{guo2025continuallearninggenerativeai}.

To mitigate interference, existing works typically modify the optimization process or allocate parameters at the task level~\citep{kirkpatrick2017overcoming,zhangspu2024, luolada}. A common line of methods introduces replay buffers to approximate joint training by revisiting stored or generated past samples~\citep{wu2025synthetic, zheng2023preventing}. Regularization-based methods constrain parameter updates to preserve parameters important for previous tasks, e.g., via parameter-importance penalties or encouraging gradient directions across tasks to be orthogonal~\citep{FLESCH20221258, doi:10.1126/science.abm0204}.
Architecture-based methods allocate task-specific capacity, such as branches or experts~\citep{yu2024boosting, dou2024loramoe}, while sharing a common backbone to avoid conflicts between tasks~\citep{li2025addressing}.


Prior works primarily focus on preventing interference with previously acquired knowledge, yet largely overlook the nature of the new task update itself. This raises a fundamental question: \emph{what internal properties of an update are inherently beneficial to continual learning?}

In this work, we revisit continual learning from a \emph{knowledge-component} perspective. Inspired by recent advances in low-rank adaptation (LoRA)~\citep{hu2022lora}, we view task-specific updates as low-rank modifications to a pre-trained model. Specifically, by employing singular value decomposition, the update of the $t$-th task $\Delta \mathbf{W}_t$ can be decomposed as a set of rank-one components $\{\sigma_{t,i}\mathbf{u}_{t,i}\mathbf{v}_{t,i}^{\top}\}$, where each component encodes a distinct input--output interaction pattern. From this viewpoint, learning a new task corresponds to adding and reshaping these knowledge components in parameter space. Empirically, we observe that the singular value spectra of such updates exhibit a pronounced imbalance: a few directions dominate most of the adaptation energy, while many weaker yet semantically useful directions are suppressed. Such spectral imbalance causes the model to rely excessively on a small set of directions, which in turn reduces cross-task generalization and makes these directions disproportionately vulnerable to interference from subsequent tasks. These findings suggest that catastrophic forgetting is not merely an optimization issue, but a structural consequence of imbalanced competition among knowledge components.

Building on this insight, we argue that effective continual learning should explicitly regulate the internal properties of task-specific updates.
Instead of treating $\Delta \mathbf{W}_t$ as an unconstrained low-rank matrix, we propose to decouple the \emph{magnitude} and the \emph{directional structure} of task updates by factorizing $\Delta \mathbf{W}_t = s_t\, \mathbf{U}_t \mathbf{V}_t^{\top}$, 
where $s_t$ controls the global adaptation energy and the columns of $\mathbf{U}_t$ and $\mathbf{V}_t$ form orthonormal bases. This formulation allows us to explicitly maintain \emph{balance among components} and to further constrain the update directions to remain orthogonal to previous tasks, reducing interference during continual learning.

To train the model under this parameterization, we formulate the learning problem on a restricted Stiefel manifold and solve it by employing a projected optimization method that seamlessly integrates with standard deep-learning training pipelines, enforcing the structural constraints with minimal modification to existing models and training workflows. Fig.~\ref{fig:eblora-scatter} compares our approach with parameter-efficient continual learning methods in terms of MFN and FWT. Our approach substantially mitigates forgetting of knowledge from both pre-training and previously fine-tuned tasks, and consistently achieves better continual learning performance than previous approaches.

In summary, our main contributions are threefold:
\begin{itemize}\setlength\itemsep{0em}
\item We introduce a knowledge-component perspective of LoRA weights, showing that interference arises from spectral imbalance among knowledge components in task-specific updates.
\item This key finding motivates a new parameter factorization that decouples the magnitude from the directional structure of updates, explicitly imposing a balanced learning of knowledge components.
\item We formulate the problem as restricted Stiefel manifold optimization and propose a new continual learning approach called EBLoRA. Both theoretical and empirical analyses justify the effectiveness of our approach.
\end{itemize}

%% file: observation.tex
\section{Motivation}
\label{motivation}


To motivate our approach, this section examines the spectral structure of low-rank adaptations and how such a structure affects continual learning.
In Sec.~\ref{sec:2.1}, we introduce a knowledge-component view in which LoRA updates are decomposed into rank-one components. 
In Sec.~\ref{sec:2.2}, we present empirical evidence that these components exhibit a highly imbalanced singular value distribution, and show that such imbalance amplifies cross-task interference, motivating the need for learning balanced knowledge components.

\subsection{Knowledge Components in Continual Learning}
\label{sec:2.1}

In this part, we revisit continual learning from a \emph{knowledge-component} perspective.
Following recent advances in low-rank adaptation, we view parameter-efficient fine-tuning methods (e.g., LoRA) as a structured form of task updates.
Given a pre-trained model with parameters $\mathbf{W}_0$, the update induced by task $t$ can be written as a low-rank matrix $\Delta \mathbf{W}_t = \mathbf{W}_t - \mathbf{W}_{t-1}$, which is explicitly parameterized as $\Delta \mathbf{W}_t = \mathbf{B}_t \mathbf{A}_t$ in the LoRA setting.
Such updates admit a compact singular value decomposition as follows:
\begin{equation}
\Delta \mathbf{W}_t = \sum_{i=1}^{r_t} \sigma_{t,i} \, \mathbf{u}_{t,i} \mathbf{v}_{t,i}^{\top},
\end{equation}
where each outer product $\mathbf{u}_{t,i}\mathbf{v}_{t,i}^\top$ defines a rank-one update direction in the parameter space. Functionally, $\mathbf{u}_{t,i}$ specifies an input-side feature direction and $\mathbf{v}_{t,i}$ determines how this feature is mapped to the output. The $(\mathbf{u}_{t,i},\mathbf{v}_{t,i})$ pair therefore determines the \emph{directional structure} of the update, while the associated singular value $\sigma_{t,i}$ controls its \emph{magnitude} (i.e., adaptation energy) along that direction.

This decomposition offers a new perspective on continual learning: \textit{sequential training can be viewed as the accumulation and modification of knowledge components.}

\subsection{Observation: Imbalance among Components Amplifies Interference}
\label{sec:2.2}

A notable yet underexplored observation is that LoRA updates exhibit a highly long-tailed singular value distribution, where most of the adaptation energy is concentrated in a few dominant components.
Fig.~\ref{fig:1a} (left) shows the normalized singular values of LoRA trained on UCIT tasks, revealing that the first few ranks capture the majority of the spectral energy.
This suggests that LoRA allocates adaptation energy in an imbalanced manner: a small set of directions is amplified excessively, whereas many potentially useful ones remain weakly expressed.
Moreover, as shown in Fig.~\ref{fig:1a} (right), the variance of singular values increases throughout training, indicating that this imbalance is progressively reinforced during the optimization process.

Such imbalance has important implications for continual learning.
We argue that the highly imbalanced updates are (i) more disruptive to previously acquired knowledge, and (ii) more vulnerable to interference from subsequent tasks, since most of the adaptation energy is concentrated in very few dominant components.
Both effects increase cross-task interference during continual learning.

To examine this hypothesis, we conduct a controlled LoRA merging \cite{zhang2026dcmerge} experiment that simulates the accumulation of task adaptations. For each benchmark, we independently fine-tune one LoRA adapter per task and then merge them to form a multi-task model. Since merging aggregates updates without further optimization, performance degradation directly reflects interference between the task-specific updates. 

In Fig.~\ref{fig:1b}, we consider two variants: direct merging (\emph{Non-smooth})~\citep{ilharco2023editing} and a smoothed version (\emph{Smooth}). In the latter case, for each task update $\Delta \mathbf{W}_t$ with singular values $\boldsymbol{\sigma}_t = (\sigma_{t,1},\ldots,\sigma_{t,r})$ sorted in descending order, we form a \emph{balanced} variant by replacing $\boldsymbol{\sigma}_t$ with a constant vector $\bar{\boldsymbol{\sigma}}_t = (\tfrac{1}{r}\sum_{j=1}^{r}\sigma_{t,j})\mathbf{1}_r$, while keeping the singular vectors unchanged.

We quantify how much of the knowledge gained through task-specific fine-tuning is preserved after merging using the \emph{Normalized Accuracy Improvement} (NAI)~\citep{marczak2025notaskleftbehind} defined as:
\begin{equation}
    \frac{A_{\text{merged}} - A_{\text{zero-shot}}}
        {A_{\text{individual}} - A_{\text{zero-shot}}},
\end{equation}
where $A_{\text{merged}}$ denotes the accuracy on the target task after merging the LoRA adapters from all tasks, $A_{\text{zero-shot}}$ denotes the accuracy of the base model on this task, and $A_{\text{individual}}$ denotes the accuracy obtained by individually fine-tuning on that task. Empirically, we find that direct merging leads to substantial drops in NAI, whereas applying singular value smoothing prior to merging significantly mitigates the drop, suggesting that the imbalanced singular value spectra contribute to cross-task interference.

The observation above suggests that \textit{one should explicitly regulate balance among components to stabilize low-rank adaptation across tasks}, motivating our approach in the following section.

%% file: method.tex
\section{The Proposed Approach}
\label{sec:method}
This section presents our method, \textit{Energy-Balanced Low-Rank Adaptation} (EBLoRA).   
In Sec.~\ref{sec:method1}, we propose a structured parameterization of task updates that balances adaptation energy and incorporates gradient-orthogonality constraints, leading to a constrained optimization problem.  
In Sec.~\ref{sec:method2}, we present an efficient optimization algorithm that solves the constrained problem using projected updates on a restricted Stiefel manifold.  
In Sec.~\ref{sec:method3}, we analyze the theoretical properties of the proposed optimization method, showing that both the projection and retraction steps are optimal under Riemann geometric constraints.

\subsection{Restricted Stiefel Optimization Problem}
\label{sec:method1}
\paragraph{Balanced task updates.}
Instead of treating $\Delta \mathbf{W}_t$ as an indivisible parameter change, we argue that continual learning should regulate how adaptation energy is distributed across its underlying knowledge components.
%
To this end, we factorize each task update as follows.
\begin{equation}
\Delta \mathbf{W}_t = s_t \, \mathbf{U}_t \mathbf{V}_t^{\top},
\end{equation}
where $s_t \in \mathbb{R}$ is a scalar that controls the overall magnitude of task update, and $\mathbf{U}_t, \mathbf{V}_t$ are matrices whose columns form orthonormal bases.
This factorization decouples the \emph{magnitude} of the task update from its \emph{directional structure}, enabling us to explicitly reason about and regulate the balance of internal knowledge components.

Building on this formulation, we introduce a learning framework that stabilizes task adaptation by maintaining balanced and well-conditioned knowledge components across tasks.


\paragraph{Mitigating interference via gradient orthogonality.}
To further mitigate backward forgetting, we explicitly constrain the direction basis of the current task update to avoid interfering with directions that are important for previous tasks.
Let $\mathbf{G}_{t-1} \in \mathbb{R}^{d \times k}$ denote a matrix whose columns store a set of representative gradient directions accumulated from tasks $\{1,\dots,t-1\}$ (e.g., sampled low-rank summaries of per-layer gradient as GPM~\citep{saha2021gradient}).
We enforce the current update basis $\mathbf{U}_t$ to be orthogonal to the subspace spanned by $\mathbf{G}_{t-1}$:
\begin{equation}
\mathbf{G}_{t-1}^{\top} \mathbf{U}_t = \mathbf{0}.
\label{eq:grad_ortho}
\end{equation}
Intuitively, this constraint prevents the new task from allocating its update directions along previously sensitive gradient modes, thereby reducing interference.

\paragraph{Restricted Stiefel manifold.}
Recall that our factorized task update $\Delta \mathbf{W}_t = s_t \mathbf{U}_t \mathbf{V}_t^{\top}$ with $\mathbf{U}_t, \mathbf{V}_t^{\top} $ are orthonormal bases.
We impose the standard Stiefel constraints:
\begin{equation}
\mathbf{U}_t^{\top}\mathbf{U}_t = \mathbf{I}_{r}, \quad \mathbf{V}_t^{\top}\mathbf{V}_t = \mathbf{I}_{r},
\label{eq:stiefel}
\end{equation}
together with the gradient-orthogonality constraint in Eq.~\eqref{eq:grad_ortho}.
This defines a \emph{restricted Stiefel manifold} for $\mathbf{U}_t$:
\begin{equation}
\mathcal{M}_t
\;=\;
\Big\{\mathbf{U}\in\mathbb{R}^{d\times r} \; \big|\;
\mathbf{U}^{\top}\mathbf{U}=\mathbf{I}_{r},
\;\mathbf{G}_{t-1}^{\top}\mathbf{U}=\mathbf{0}
\Big\}.
\label{eq:restricted_stiefel}
\end{equation}
Accordingly, the learning problem for task $t$ becomes a constrained optimization on $\mathbb{R}\times \mathcal{M}_t \times \mathrm{St}(d,r)$:
\begin{equation}
\min_{s_t \in \mathbb{R},\, \mathbf{U}_t \in \mathcal{M}_t,\, \mathbf{V}_t \in \mathrm{St}(d,r)}
\;\;
\mathcal{L}_t\!\left(W_{t-1} + s_t \mathbf{U}_t \mathbf{V}_t^{\top}\right)
\label{eq:final_constrained_obj}
\end{equation}
where $\mathrm{St}(d,r)=\{\mathbf{X}\in\mathbb{R}^{d\times r}\mid \mathbf{X}^{\top}\mathbf{X}=\mathbf{I}_r\}$.

\subsection{Optimization over the restricted Stiefel manifold}
\label{sec:method2}
Before detailing the individual components, we briefly summarize our Riemannian optimization procedure.
Since the update basis $\mathbf{U}_t$ is constrained to lie on a manifold feasible set, standard Euclidean optimization is no longer applicable.
Instead, optimization is performed by alternating between two conceptually simple steps:
(i) mapping unconstrained updates to the tangent space of the constraint manifold, where valid descent directions are defined; and
(ii) transporting the updated parameters back onto the manifold to restore feasibility.
This framework allows us to leverage standard first-order optimizers while explicitly respecting the geometric structure induced by the orthogonality and gradients subspace constraints.

\begin{figure}[t]
    \centering
    \includegraphics[width=\linewidth]{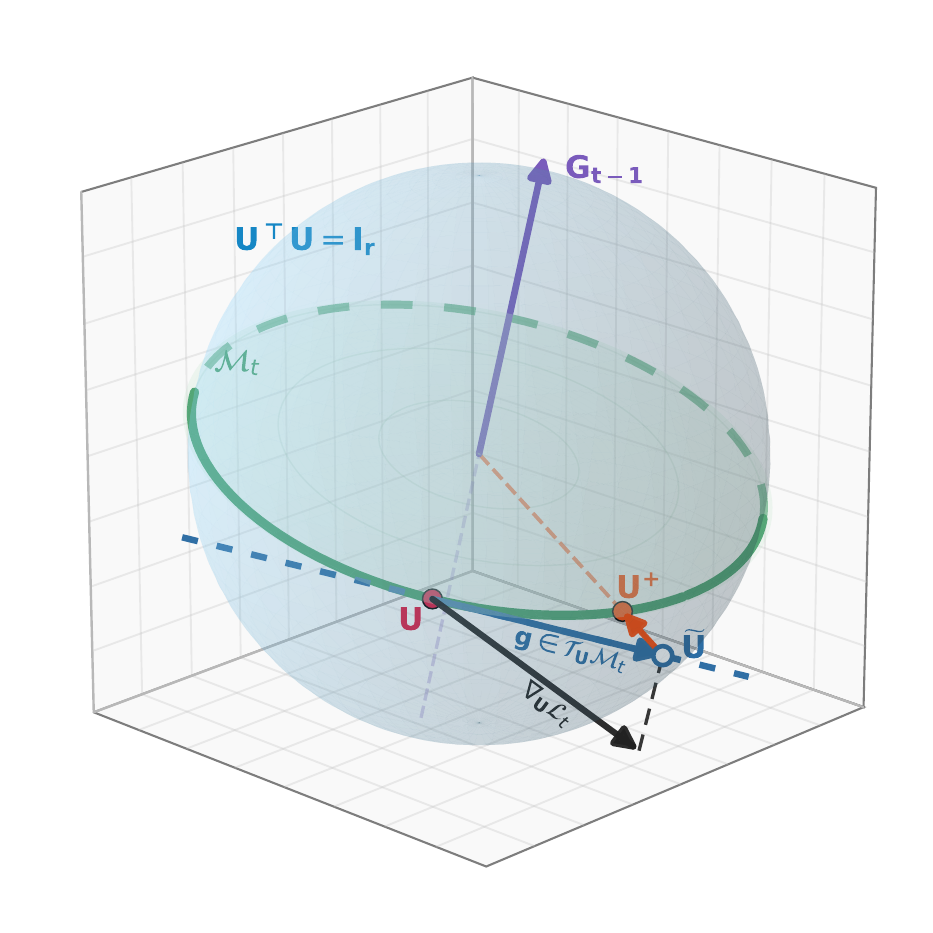}
    \caption{
    An intuitive geometric illustration of one-step optimization (Alg.~\ref{alg:one_step_U}) on the restricted Stiefel manifold $\mathcal M_t$ in low dimensions.
    The feasible set $\mathcal M_t$ is the intersection between the unit sphere $\mathbf{U^\top} \mathbf{U}=\mathbf{I_r}$ and the linear constraint subspace $\mathrm{Null}(\mathbf{G_{t-1}^\top})$.
    Starting from the current iterate $\mathbf{U}\in\mathcal M_t$, the Euclidean gradient $\nabla_{\mathbf U}\mathcal L_t$ is first projected onto the tangent space $\mathcal T_{\mathbf{U}}\mathcal M_t$ to obtain the Riemannian gradient $\mathbf g$.
    A tangent-space update yields the tentative point $\mathbf{\widetilde U}$, which is then retracted back to the manifold to produce the next feasible iterate $\mathbf{U^{+}}\in\mathcal M_t$.
    }
    \label{fig:one_step_geom}
\end{figure}

\paragraph{Tangent-space projection.}
Recall that the update basis $\mathbf U_t$ is constrained to lie on the restricted manifold
$\mathcal M_t=\{\mathbf U\mid \mathbf U^\top\mathbf U=\mathbf I,\ \mathbf G_{t-1}^\top\mathbf U=\mathbf 0\}$,
which is the intersection of the Stiefel manifold and a linear subspace constraint.
Accordingly, the tangent space at $\mathbf U$ is given by
\begin{equation}
\mathcal T_{\mathbf U}\mathcal M_t
=
\Big\{
\mathbf Z\in\mathbb R^{d\times r}
\ \big|\ 
\mathbf U^\top\mathbf Z+\mathbf Z^\top\mathbf U=\mathbf 0,\ 
\mathbf G_{t-1}^\top\mathbf Z=\mathbf 0
\Big\}.
\label{eq:tangent_restricted}
\end{equation}
To project an ambient matrix $\mathbf Z$ onto this tangent space, we remove the component that violates the linear constraint:
\begin{equation}
\mathbf Z_0 = \mathbf P_{\perp \mathbf G_{t-1}}\mathbf Z,
\label{eq:proj_nullG}
\end{equation}
where $\mathbf P_{\perp \mathbf G_{t-1}}= \mathbf I-\mathbf G_{t-1} \mathbf G_{t-1}^\top$ denotes the projector onto the null space of $\mathbf G^\top$. Then, we apply the standard Stiefel tangent projection within the resulting subspace:
\begin{equation}
\mathcal P_{\mathcal T_{\mathbf U}\mathcal M_t}(\mathbf Z)
=
\mathbf Z_0
-
\mathbf U_t\,\mathrm{sym}(\mathbf U^\top\mathbf Z_0),
\label{eq:proj_restricted_tangent}
\end{equation}
where $\mathrm{sym}(\mathbf A)=\tfrac12(\mathbf A+\mathbf A^\top)$. Since both $\mathbf U_t$ and $\mathbf Z_0$ lie in the null space of $\mathbf G_{t-1}^\top$, the resulting projection automatically satisfies $\mathbf G_{t-1}^\top\mathcal P_{\mathcal T_{\mathbf U}\mathcal M_t}(\mathbf Z)=\mathbf 0$ and therefore lies in the valid tangent space.

\paragraph{Manifold retraction.}
Given a tentative update $\widetilde{\mathbf U}$, we retract it back to $\mathcal M_t$ in two steps: we first remove the $\mathbf G$-component and then enforce orthonormality by the whitening operator.
Concretely, define
\begin{equation}
\mathbf Y = \mathbf P_{\perp \mathbf G_{t-1}}\widetilde{\mathbf U},
\label{eq:retract_Y}
\end{equation}
and then apply the whitening retraction~\citep{witness} as follows:
\begin{equation}
\mathbf U^{+} = \mathrm{whiten}(\mathbf Y)
\;\triangleq\;
\mathbf Y\big(\mathbf Y^\top \mathbf Y\big)^{-\frac12}.
\label{eq:retract_whiten}
\end{equation}
This operation ensures $\mathbf U^{+\top}\mathbf U^{+}=\mathbf I_r$, and since $\mathbf Y\in\mathrm{Null}(\mathbf G^\top)$, it also preserves the linear constraint $\mathbf G^\top \mathbf U^{+}=\mathbf 0$.
In practice, $(\mathbf Y^\top \mathbf Y)^{-\frac12}$ can be computed via an eigendecomposition (or SVD) of the small $r\times r$ matrix $\mathbf Y^\top \mathbf Y$.

\paragraph{Practical implementation with standard optimizers.}
Although the formulation suggests Riemannian optimization, classical Riemannian solvers are impractical in deep learning and can be incompatible with standard first-order optimizers such as SGD with momentum or Adam. We therefore adopt a projected optimization scheme that combines manifold constraints with standard optimizers as in Alg.~\ref{alg:one_step_U}: Euclidean gradients are projected onto the tangent space (line 3), updated by the optimizer (line 6), projected again to remove infeasible components (line 8), and finally retracted onto the manifold (line 13). Fig.~\ref{fig:one_step_geom} provides a geometric illustration of one-step optimization on the restricted Stiefel manifold in low dimensions, where the standard Stiefel manifold is represented by the unit sphere in three-dimensional space. This retains the behavior of modern optimizers while enforcing the constraints that help reduce cross-task interference. 

\paragraph{Initialization of structured task updates.}
For the $t$-th task, we initialize $(s_t, \mathbf{U}_t, \mathbf{V}_t)$ from the projected task gradient. We first collect a small gradient snapshot $\mathcal{G}_t$ from task $t$ and project it onto the null space of $\mathbf{G}_{t-1}$ via $\mathcal{G}_t^{\text{proj}} = \mathbf{P}_{\perp \mathbf{G}_{t-1}}\mathcal{G}_t$, which removes components aligned with previously acquired knowledge. SVD of $\mathcal{G}_t^{\text{proj}}$ then provides a task-aligned basis, from which we take the leading $r$ singular vectors to form $\mathbf{U}_t^{(0)}$ and $\mathbf{V}_t^{(0)}$. By construction, $\mathbf{U}_t^{(0)}$ already satisfies $\mathbf{G}_{t-1}^\top \mathbf{U}_t^{(0)}=0$, ensuring a feasible initialization that respects the gradient-orthogonality constraint.
The scalar $s_t$, which controls the overall update magnitude, is initialized separately. Recent work such as LiNeS~\citep{wang2025lines} show that deeper layers benefit from larger update scales, while shallower layers prefer smaller ones. Following this trend, we adopt a simple depth-aware initialization for $s_t$, which stabilizes optimization in practice. Detailed information of depth-aware initialization is presented in Appendix~\ref{app:implementation} and the complete training procedure is summarized in Alg.~\ref{alg:overall_cl}.

\begin{algorithm}[t]
\caption{One-step Optimization for $\mathbf U$ on $\mathcal M_t$}
\label{alg:one_step_U}
\begin{algorithmic}[1]
\REQUIRE Current $\mathbf U \in \mathcal M_t$ with $\mathbf U^\top\mathbf U=\mathbf I_r$ and $\mathbf G^\top\mathbf U=\mathbf 0$;
stored directions $\mathbf G\in\mathbb R^{d\times k}$; 
Euclidean gradient $\nabla_{\mathbf U}\mathcal L_t$;
optimizer state $\xi_U$.
\ENSURE Updated $\mathbf U^{+}\in\mathcal M_t$ and optimizer state $\xi_U^{+}$.

\STATE $\mathbf P_{\perp\mathbf G} \leftarrow \mathbf I - \mathbf G\mathbf G^\top$
\STATE $\mathrm{sym}(\mathbf A) \leftarrow \tfrac12(\mathbf A+\mathbf A^\top)$

\STATE \textbf{// Pre-step tangent projection of the gradient}
\STATE $\mathbf Z \leftarrow \mathbf P_{\perp\mathbf G}\nabla_{\mathbf U}\mathcal L_t$
\STATE $\mathbf g \leftarrow \mathbf Z - \mathbf U\,\mathrm{sym}(\mathbf U^\top \mathbf Z)$
\hfill \COMMENT{$\mathbf g \in \mathcal T_{\mathbf U}\mathcal M_t$}

\STATE \textbf{// Optimizer step in the tangent space}
\STATE $(\Delta,\,\xi_U^{+}) \leftarrow \textsc{OptStep}(\mathbf g,\,\xi_U)$
\hfill \COMMENT{e.g., SGD-momentum/Adam produces an increment $\Delta$}

\STATE \textbf{//  Post-step tangent correction of the increment}
\STATE $\mathbf Z_\Delta \leftarrow \mathbf P_{\perp\mathbf G}\Delta$
\STATE $\Delta \leftarrow \mathbf Z_\Delta - \mathbf U\,\mathrm{sym}(\mathbf U^\top \mathbf Z_\Delta)$
\hfill \COMMENT{$\Delta \in \mathcal T_{\mathbf U}\mathcal M_t$}

\STATE \textbf{//  Apply the corrected increment}
\STATE $\widetilde{\mathbf U} \leftarrow \mathbf U + \Delta$

\STATE \textbf{//  Manifold retraction via whitening}
\STATE $\mathbf Y \leftarrow \mathbf P_{\perp\mathbf G}\widetilde{\mathbf U}$
\STATE $\mathbf U^{+} \leftarrow \mathbf Y\big(\mathbf Y^\top\mathbf Y\big)^{-\frac12}$
\hfill \COMMENT{$\mathbf U^{+}\in\mathcal M_t$}
\end{algorithmic}
\end{algorithm}

\subsection{Theoretical Analysis}
\label{sec:method3}

To complete our projected optimization formulation, it is necessary to verify that the projection introduced above satisfies two key geometric properties~\citep{remmain}: (i) the tangent-space projection (Eqs.~\ref{eq:proj_nullG},\ref{eq:proj_restricted_tangent}) must produce valid descent directions on the restricted manifold; and (ii) the retraction step (Eqs.~\ref{eq:retract_Y},\ref{eq:retract_whiten}) must map tentative iterates to the closest feasible point. These two properties are formally stated below.
\begin{proposition}[Optimality of the tangent-space projection]
\label{prop:tangent_optimality}
Let $\mathcal M_t=\{\mathbf U\in\mathbb R^{d\times r}\mid 
\mathbf U^\top\mathbf U=\mathbf I,\ \mathbf G^\top\mathbf U=\mathbf 0\}$,
and let $\mathbf U\in\mathcal M_t$ be a feasible point.
For any ambient matrix $\mathbf Z\in\mathbb R^{d\times r}$, define:
\begin{equation}
\mathcal P_{\mathcal T_{\mathbf U}\mathcal M_t}(\mathbf Z)
=
\mathbf Z_0 - \mathbf U\,\mathrm{sym}(\mathbf U^\top \mathbf Z_0),
\quad
\mathbf Z_0 = (\mathbf I-\mathbf G\mathbf G^\top)\mathbf Z .
\end{equation}
Then $\mathcal P_{\mathcal T_{\mathbf U}\mathcal M_t}(\mathbf Z)$ is the unique solution of
\begin{equation}
\min_{\mathbf Y\in\mathcal T_{\mathbf U}\mathcal M_t}
\;\;
\|\mathbf Y - \mathbf Z\|_F^2 ,
\end{equation}
and hence the orthogonal projection of $\mathbf Z$ onto the tangent space
$\mathcal T_{\mathbf U}\mathcal M_t$ under the Frobenius norm.
\end{proposition}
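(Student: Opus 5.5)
The plan is to use the fact that $\mathcal T_{\mathbf U}\mathcal M_t$, as given in Eq.~\eqref{eq:tangent_restricted}, is a \emph{linear subspace} of $\mathbb R^{d\times r}$. It is the intersection of the kernels of the linear maps $\mathbf Y\mapsto \mathbf U^\top\mathbf Y+\mathbf Y^\top\mathbf U$ and $\mathbf Y\mapsto \mathbf G^\top\mathbf Y$. Minimizing $\|\mathbf Y-\mathbf Z\|_F^2$ over a linear subspace of a finite-dimensional inner-product space has a unique minimizer. That minimizer is characterized as the unique $\mathbf Y^\star$ with $\mathbf Y^\star\in\mathcal T_{\mathbf U}\mathcal M_t$ and $\mathbf Z-\mathbf Y^\star\perp\mathcal T_{\mathbf U}\mathcal M_t$ in the Frobenius inner product $\langle \mathbf A,\mathbf B\rangle=\mathrm{tr}(\mathbf A^\top\mathbf B)$. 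This follows from the Pythagorean identity $\|\mathbf Y-\mathbf Z\|_F^2=\|\mathbf Y-\mathbf Y^\star\|_F^2+\|\mathbf Y^\star-\mathbf Z\|_F^2$ for all $\mathbf Y\in\mathcal T_{\mathbf U}\mathcal M_t$, which also gives uniqueness. So I will verify these two conditions for $\mathbf Y^\star=\mathcal P_{\mathcal T_{\mathbf U}\mathcal M_t}(\mathbf Z)$. Throughout I assume, as is implicit in calling $\mathbf I-\mathbf G\mathbf G^\top$ a projector, that $\mathbf G$ has orthonormal columns, $\mathbf G^\top\mathbf G=\mathbf I_k$. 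I will state this explicitly, since otherwise $\mathbf G^\top\mathbf Z_0\neq\mathbf 0$ in general.

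\emph{Step 1 (feasibility).} Because $\mathbf G^\top\mathbf G=\mathbf I$, we have $\mathbf G^\top\mathbf Z_0=\mathbf G^\top\mathbf Z-\mathbf G^\top\mathbf Z=\mathbf 0$. Together with $\mathbf G^\top\mathbf U=\mathbf 0$ this gives $\mathbf G^\top\mathbf Y^\star=\mathbf 0$. Next set $\mathbf S=\mathrm{sym}(\mathbf U^\top\mathbf Z_0)$ and use $\mathbf U^\top\mathbf U=\mathbf I$ to get $\mathbf U^\top\mathbf Y^\star=\mathbf U^\top\mathbf Z_0-\mathbf S$. Adding its transpose yields $\mathbf U^\top\mathbf Z_0+\mathbf Z_0^\top\mathbf U-2\mathbf S=\mathbf 0$, so $\mathbf Y^\star\in\mathcal T_{\mathbf U}\mathcal M_t$.

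\emph{Step 2 (orthogonality of the residual).} Write $\mathbf Z-\mathbf Y^\star=\mathbf G\mathbf G^\top\mathbf Z+\mathbf U\mathbf S$. For any $\mathbf W\in\mathcal T_{\mathbf U}\mathcal M_t$, the first term gives $\langle\mathbf G\mathbf G^\top\mathbf Z,\mathbf W\rangle=\mathrm{tr}(\mathbf Z^\top\mathbf G\,\mathbf G^\top\mathbf W)=0$, since $\mathbf G^\top\mathbf W=\mathbf 0$. The second term gives $\langle\mathbf U\mathbf S,\mathbf W\rangle=\mathrm{tr}(\mathbf S^\top\mathbf U^\top\mathbf W)=\mathrm{tr}(\mathbf S\,\mathbf K)$, where $\mathbf K=\mathbf U^\top\mathbf W$ is skew-symmetric by the tangent condition. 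The trace of a symmetric matrix times a skew-symmetric matrix vanishes, because $\mathrm{tr}(\mathbf S\mathbf K)=\mathrm{tr}((\mathbf S\mathbf K)^\top)=\mathrm{tr}(\mathbf K^\top\mathbf S)=-\mathrm{tr}(\mathbf S\mathbf K)$. Hence $\mathbf Z-\mathbf Y^\star\perp\mathcal T_{\mathbf U}\mathcal M_t$.

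Combining the two steps with the characterization of orthogonal projection gives both optimality and uniqueness. The main obstacle is conceptual rather than computational. One must recognize that the two-stage formula (first project onto $\mathrm{Null}(\mathbf G^\top)$, then apply the Stiefel projection) does not leak back out of $\mathrm{Null}(\mathbf G^\top)$. This relies on $\mathbf U$ lying in that null space and on $\mathbf G$ being orthonormal. It also matters that the Stiefel correction $\mathbf U\mathbf S$ is orthogonal to the whole restricted tangent space, not just the unrestricted Stiefel one; the skew/symmetric trace argument handles this. I would also note in passing that the $\mathbf U_t$ in Eq.~\eqref{eq:proj_restricted_tangent} should be read as the base point $\mathbf U$.
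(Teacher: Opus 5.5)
Your proposal is correct and follows essentially the same route as the paper's proof. You verify that $\mathcal P_{\mathcal T_{\mathbf U}\mathcal M_t}(\mathbf Z)$ lies in the linear tangent space, then split the residual as $\mathbf G\mathbf G^\top\mathbf Z+\mathbf U\mathbf S$ and annihilate each piece using $\mathbf G^\top\mathbf W=\mathbf 0$ and the vanishing of $\mathrm{tr}(\mathbf S\mathbf K)$ for symmetric $\mathbf S$ and skew-symmetric $\mathbf K$. Your explicit assumption $\mathbf G^\top\mathbf G=\mathbf I$ is exactly what the paper uses implicitly when it says $\mathbf G^\top\mathbf Z_0=\mathbf 0$ ``by construction''; it states this assumption openly only in the proof of Proposition~\ref{prop:retraction_optimality}.
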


\paragraph{Remark.}
Proposition~\ref{prop:tangent_optimality} establishes that the proposed operator computes the unique orthogonal projection onto the tangent space of $\mathcal M_t$ under the Frobenius inner product. Consequently, the projected gradient used in Alg.~\ref{alg:one_step_U} corresponds to the canonical Riemannian gradient on the restricted Stiefel manifold. This guarantees that our projected optimizer performs valid first-order descent while explicitly respecting both orthonormality and gradient-orthogonality constraints. Importantly, it ensures that no optimization signal is removed beyond what is strictly required for feasibility, thereby retaining compatibility with standard deep-learning optimizers such as SGD or Adam.

\begin{proposition}[Optimality of the manifold retraction]
\label{prop:retraction_optimality}
Let $\mathcal M_t=\{\mathbf U\in\mathbb R^{d\times r}\mid 
\mathbf U^\top\mathbf U=\mathbf I_r,\ \mathbf G^\top\mathbf U=\mathbf 0\}$ and let $\widetilde{\mathbf U}\in\mathbb R^{d\times r}$ be an arbitrary ambient iterate. Consider the two-step retraction
\begin{equation}
\mathbf Y = (\mathbf I-\mathbf G\mathbf G^\top)\widetilde{\mathbf U},
\quad
\mathbf U^{+} = \mathbf Y(\mathbf Y^\top \mathbf Y)^{-\frac12}.
\end{equation}
Then $\mathbf U^{+}\in\mathcal M_t$ and $\mathbf U^{+}$ is the unique solution to
\begin{equation}
\min_{\mathbf U\in\mathcal M_t}
\;\;
\|\mathbf U - \widetilde{\mathbf U}\|_F^2 ,
\end{equation}
i.e., the closest feasible point to $\widetilde{\mathbf U}$ in the Frobenius norm.
\end{proposition}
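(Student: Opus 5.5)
The plan is to reduce the problem to a trace maximization and then apply a von Neumann-type argument to the polar factor of $\mathbf Y$. Throughout I take the implicit assumptions of the construction to be in force. First, $\mathbf G$ has orthonormal columns, so that $\mathbf P:=\mathbf I-\mathbf G\mathbf G^\top$ is the symmetric idempotent projector onto $\mathrm{Null}(\mathbf G^\top)$. Second, $\mathbf Y=\mathbf P\widetilde{\mathbf U}$ has full column rank $r$, so that $(\mathbf Y^\top\mathbf Y)^{-1/2}$ exists.

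\textbf{Step 1: feasibility.} Write a thin SVD $\mathbf Y=\mathbf Q\mathbf S\mathbf R^\top$ with $\mathbf Q\in\mathbb R^{d\times r}$, $\mathbf Q^\top\mathbf Q=\mathbf I_r$, $\mathbf S=\mathrm{diag}(s_1,\dots,s_r)$ with all $s_i>0$, and $\mathbf R$ orthogonal. A direct computation gives $\mathbf U^+=\mathbf Y(\mathbf Y^\top\mathbf Y)^{-1/2}=\mathbf Q\mathbf R^\top$, so $\mathbf U^{+\top}\mathbf U^+=\mathbf I_r$. The columns of $\mathbf Q$ lie in $\mathrm{range}(\mathbf Y)\subseteq\mathrm{range}(\mathbf P)=\mathrm{Null}(\mathbf G^\top)$. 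Hence $\mathbf G^\top\mathbf U^+=\mathbf 0$ and $\mathbf U^+\in\mathcal M_t$.

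\textbf{Step 2: reduction to a trace problem.} For any $\mathbf U\in\mathcal M_t$ we have $\mathbf U^\top\mathbf U=\mathbf I_r$ and $\mathbf P\mathbf U=\mathbf U$. Therefore
\[
\|\mathbf U-\widetilde{\mathbf U}\|_F^2=r+\|\widetilde{\mathbf U}\|_F^2-2\,\mathrm{tr}(\mathbf U^\top\widetilde{\mathbf U}).
\]
Moreover, $\mathrm{tr}(\mathbf U^\top\widetilde{\mathbf U})=\mathrm{tr}((\mathbf P\mathbf U)^\top\widetilde{\mathbf U})=\mathrm{tr}(\mathbf U^\top\mathbf P\widetilde{\mathbf U})=\mathrm{tr}(\mathbf U^\top\mathbf Y)$, using that $\mathbf P$ is symmetric. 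So minimizing the distance over $\mathcal M_t$ is equivalent to maximizing $\mathrm{tr}(\mathbf U^\top\mathbf Y)$ over $\mathcal M_t$. This step is where the first projection enters: the $\mathbf G$-component of $\widetilde{\mathbf U}$ is invisible to the objective on the feasible set.

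\textbf{Step 3: upper bound and equality case.} Set $\mathbf M=\mathbf Q^\top\mathbf U\mathbf R$. Then
\[
\mathrm{tr}(\mathbf U^\top\mathbf Y)=\mathrm{tr}(\mathbf S\,\mathbf R^\top\mathbf U^\top\mathbf Q)=\sum_i s_i\,\mathbf M_{ii}.
\]
Since $\|\mathbf M\|_2\le\|\mathbf Q\|_2\|\mathbf U\|_2\|\mathbf R\|_2=1$, each $|\mathbf M_{ii}|\le 1$. This gives the bound $\mathrm{tr}(\mathbf U^\top\mathbf Y)\le\sum_i s_i$, which is attained by $\mathbf U^+=\mathbf Q\mathbf R^\top$, since then $\mathbf M=\mathbf I$. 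So $\mathbf U^+$ is a minimizer.

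\textbf{Step 4: uniqueness.} I expect this to be the main point needing care, and it is exactly where $s_i>0$ is used. Equality in the bound forces $\mathbf M_{ii}=1$ for all $i$. Each column of $\mathbf M$ has Euclidean norm at most $\|\mathbf U\mathbf R e_i\|=1$, so a unit diagonal entry forces the column to be $e_i$, giving $\mathbf M=\mathbf I$. Then $\mathbf Q^\top(\mathbf U\mathbf R e_i)=e_i$ with $\|\mathbf U\mathbf R e_i\|=1$, so the unit vector $\mathbf U\mathbf R e_i$ lies entirely in $\mathrm{range}(\mathbf Q)$ and equals $\mathbf Q e_i$. Hence $\mathbf U\mathbf R=\mathbf Q$, i.e. $\mathbf U=\mathbf Q\mathbf R^\top=\mathbf U^+$.

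If $\mathbf Y$ were rank-deficient, both the formula $\mathbf Y(\mathbf Y^\top\mathbf Y)^{-1/2}$ and uniqueness would fail. I would therefore state the full-rank assumption explicitly, noting that it holds generically and in particular for small steps from a feasible $\mathbf U$.
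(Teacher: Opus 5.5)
Your proof is correct and follows essentially the paper's route. Both reduce the distance to maximizing $\mathrm{tr}(\mathbf U^\top\mathbf Y)$ and then identify the polar factor $\mathbf Q\mathbf R^\top$ through the thin SVD of $\mathbf Y$; the paper reaches the trace problem via the Pythagorean split $\|\mathbf U-\widetilde{\mathbf U}\|_F^2=\|\mathbf U-\mathbf Y\|_F^2+\|\mathbf G\mathbf G^\top\widetilde{\mathbf U}\|_F^2$ plus a relaxation to the full Stiefel manifold, where you use $\mathrm{tr}(\mathbf U^\top\widetilde{\mathbf U})=\mathrm{tr}(\mathbf U^\top\mathbf Y)$ directly on $\mathcal M_t$. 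Your contraction-based uniqueness argument is tighter than the paper's, which calls the analogue of your $\mathbf M$ orthogonal (false when $d>r$) and passes from $\mathbf M=\mathbf I_r$ to $\mathbf U=\mathbf Q\mathbf R^\top$ without the range argument you supply.
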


\paragraph{Remark.}
Proposition~\ref{prop:retraction_optimality} shows that the proposed retraction is not merely a feasibility-restoring heuristic, but in fact computes the unique closest point to the tentative iterate $\widetilde{\mathbf U}$ on the restricted Stiefel manifold under the Frobenius norm. This optimality interpretation ensures that no unnecessary distortion is introduced during the retraction step, and that feasibility is maintained with minimal deviation from the optimizer dynamics. In particular, the result implies that our retraction preserves first-order optimization signals while exactly enforcing both orthogonality and gradient-orthogonality constraints.

%% file: experiment.tex
\begin{table*}[!ht]
\centering
\caption{Comparison of the proposed EBLoRA method with existing approaches on the UCIT benchmark.}
\label{tab:comparisonUCIT}

\resizebox{\textwidth}{!}{%
\begin{tabular}{l|cccccc|ccccc}
\toprule
\multirow{2}{*}{Method} & \multicolumn{6}{c|}{\textbf{Accuracy on Each Task (\%)}} & \multicolumn{4}{c}{\textbf{Aggregated Results (\%)}} \\
 & $ImageNet\text{-}R$ & $ArxivQA$ & $VizWiz$ & $IconQA$ & $CLEVR$ & $Flickr30k$ & MFN$\uparrow$ & MAA$\uparrow$ & BWT$\uparrow$ & FWT$\uparrow$ \\
\midrule
Zero-shot & 16.3 & 53.7 & 38.4 & 19.2 & 20.6 & 41.9 \\
\midrule
LoRA-FT \tiny{\citep{hu2022lora}} & 58.6 & 76.7 & 45.7 & 67.4 & 61.6 & \textbf{58.0} & 61.4 & 76.5 & -15.4 & 26.8 \\
O-LoRA \tiny{\citep{wang2023orthogonal}} & 74.2 & 80.9 & 45.3 & 62.9 & 63.8 & 57.2 & 64.1 & 77.8 & -11.1 & 27.0 \\
CL-MoE \tiny{\citep{huai2025clmoe}} & 61.2 & 75.8 & 44.4 & 52.6 & 54.4 & 57.3 & 57.6 & 74.4 & -13.9 & 27.6 \\
SEFE \tiny{\citep{chen2025sefe}} & 80.2 & 79.1 & 47.1 & 69.4 & 65.7 & 57.3 & 66.5 & 79.1 & -9.8 & 27.5 \\
KeepLoRA \tiny{~\citep{luo2026keeplora}} & 82.4 & 86.7 & 46.6 & 67.8 & \textbf{66.4} & 57.2 & 67.9 & 80.1 & -7.9 & 28.4 \\
\rowcolor{gray!25}
EBLoRA (Ours) & \textbf{89.6} & \textbf{94.2} & \textbf{56.6} & \textbf{75.2} & 66.2 & 55.3 & \textbf{72.8} & \textbf{82.9} & \textbf{-2.0} & \textbf{34.6} \\
\bottomrule
\end{tabular}
}
\end{table*}

\begin{table*}[!ht]
\centering
\caption{Comparison of the proposed EBLoRA method with existing approaches on the MLLM-DCL benchmark.}
\label{tab:comparisonDCL}
\resizebox{\textwidth}{!}{ 
\begin{tabular}{l|ccccc|cccc}
\toprule
\multirow{2}{*}{Method} & \multicolumn{5}{c|}{\textbf{Accuracy on Each Task (\%)}} & \multicolumn{4}{c}{\textbf{Aggregated Results (\%)}} \\
 & $Sensing$ & $Medical$ & $Driving$ & $Science$ & $Finance$ & MFN$\uparrow$ & MAA$\uparrow$ & BWT$\uparrow$ & FWT$\uparrow$ \\
\midrule
Zero-shot & 32.3 & 28.3 & 15.6 & 35.6 & 62.6 \\
\midrule
LoRA-FT \tiny{\citep{hu2022lora}} & 69.3 & 44.3 & 29.1 & 41.4 & 88.4 & 54.5 & 61.9 & -11.2 & 32.4 \\
O-LoRA \tiny{\citep{wang2023orthogonal}} & 72.3 & 46.9 & 31.6 & 41.5 & 88.1 & 56.1 & 62.8 & -9.7 & 33.3 \\
CL-MoE \tiny{\citep{huai2025clmoe}} & 71.8 & 47.4 & 29.5 & 41.5 & 89.2 & 55.9 & 62.4 & -10.5 & 32.6 \\
SEFE \tiny{\cite{chen2025sefe}} & 77.1 & 50.9 & 40.3 & 43.0 & 88.4 & 59.9 & 64.6 & -6.0 & 33.5 \\
KeepLoRA \tiny{\citep{luo2026keeplora}} & 78.8 & 54.3 & 50.2 & 49.5 & 89.3 & 64.4 & 67.3 & -2.1 & 33.7 \\
\rowcolor{gray!25}
EBLoRA (Ours) &\textbf{79.4}& \textbf{57.2} & \textbf{53.9} & \textbf{52.5} & \textbf{90.6} &  \textbf{66.7} & \textbf{68.2} & \textbf{-0.7} & \textbf{34.4} \\
\bottomrule
\end{tabular}
}
\end{table*}

\section{Experiments}
We evaluate EBLoRA under the standard continual learning setting in which tasks arrive sequentially without access to the past data. In Sec.~\ref{sec:4.1}, we compare our method with multiple parameter-efficient continual learning baselines on two vision-language benchmarks to assess overall performance in realistic multi-task scenarios. In Sec.~\ref{sec:4.2}, we conduct controlled interference analyses to investigate why spectral balance improves robustness and reduces cross-task interference. Finally, in Sec.~\ref{sec:4.3}, we present an ablation study that decomposes the contribution of each component of EBLoRA and isolates their effects.

\begin{figure}[t]
    \centering
    \includegraphics[width=\linewidth]{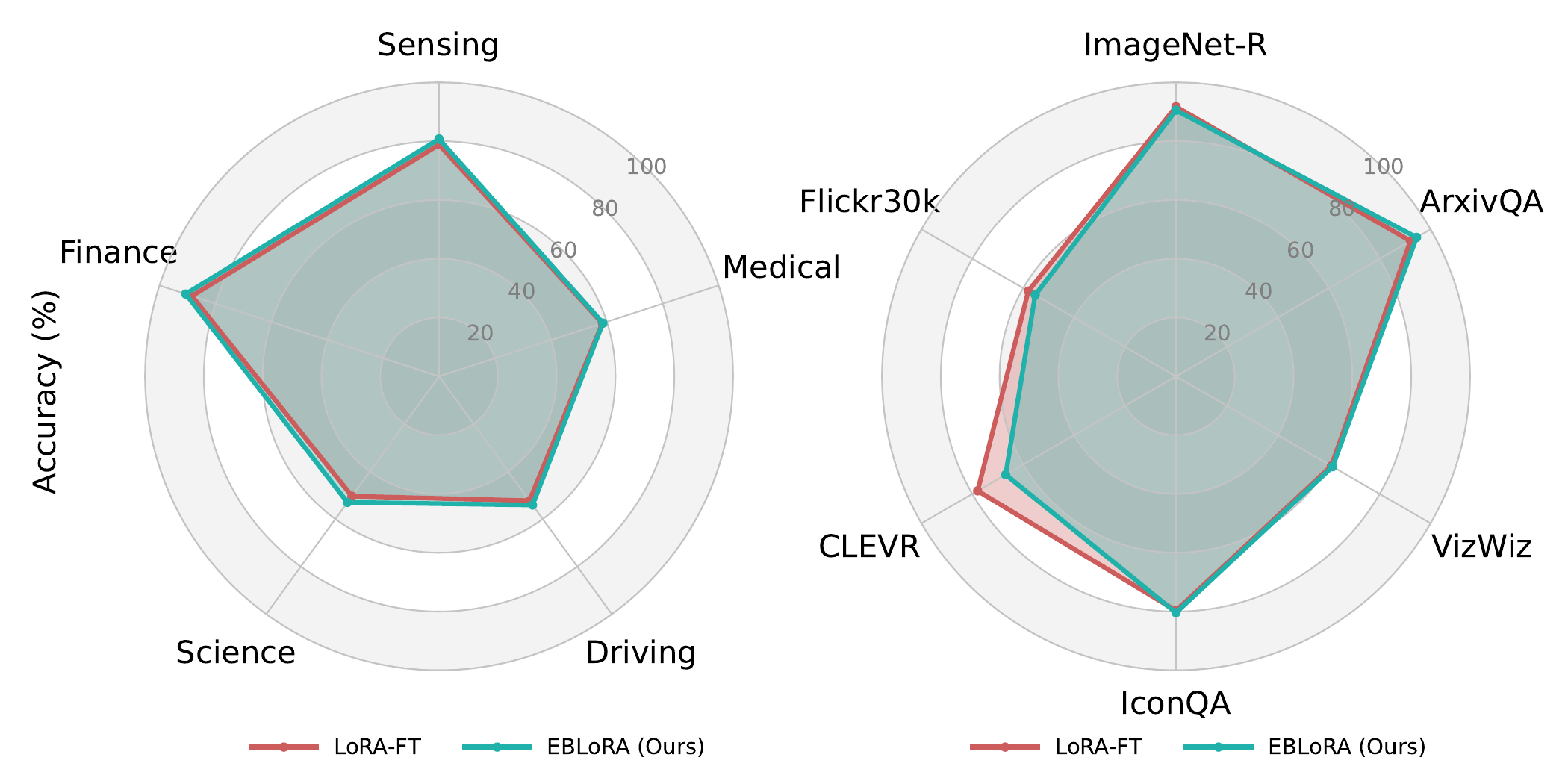}
    \caption{Comparison between \textbf{LoRA-FT} and \textbf{EBLoRA (Ours)} on the \textbf{MLLM-DCL} (left) and \textbf{UCIT} (right) benchmarks.
    The radar plots display the accuracy of each task immediately after it is learned, showing that our method preserves the model's ability to continuously learn new tasks.
    }
    \label{fig:radar_first}
    \vspace{-0.4cm}
\end{figure}

\begin{figure}[t]
    \centering
    \includegraphics[width=\linewidth]{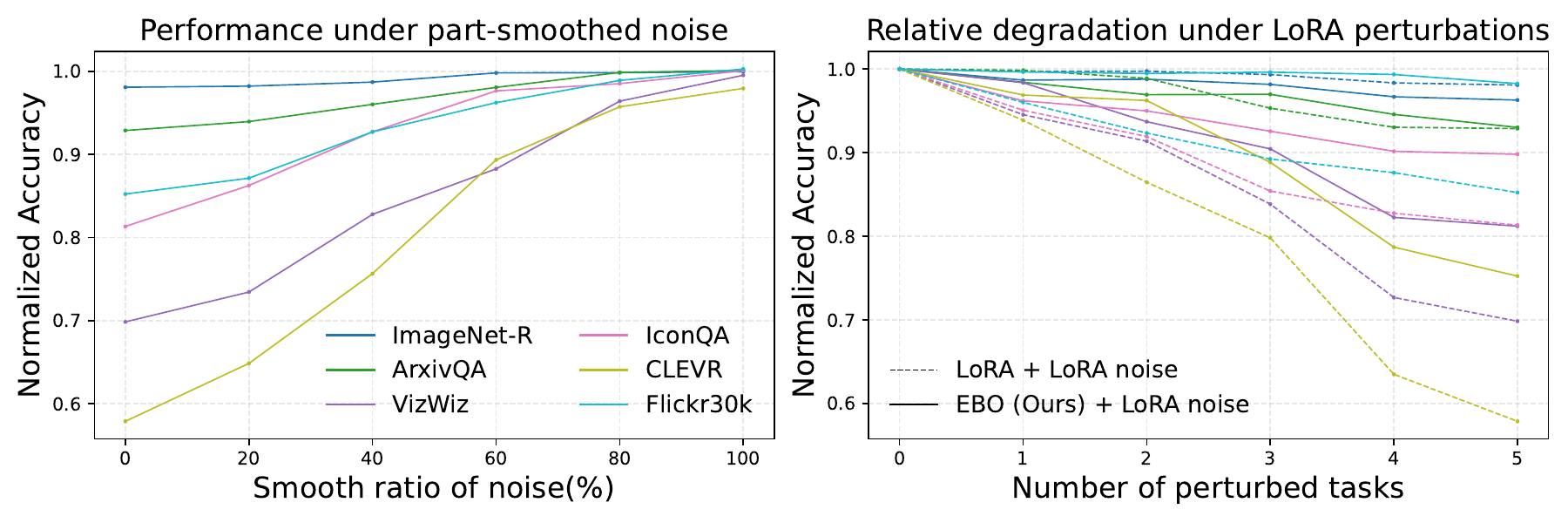}
    \caption{Interference analysis of balanced singular value. Noise is generated through merging LoRA weights trained on UCIT tasks.
    Left: applying partially smoothed noise on LoRA reduces interference in seen tasks.
    Right: under equal-norm perturbations, EBO approach exhibits higher robustness compared to LoRA.} 
    \label{fig:noise-stability}
    \vspace{-0.4cm}
\end{figure}

\subsection{Main Results} 
\label{sec:4.1}
We evaluate our method on the encoder-decoder
model LLaVA. The experiments (Tab.~\ref{tab:comparisonUCIT} and~\ref{tab:comparisonDCL}) are conducted on the UCIT~\cite{guo2025mcitlib} and the MLLM-DCL~\cite{guo2025mcitlib} benchmarks, including various instruction formats such as image
captioning, visual question-answer, and multiple-choice questions. Detailed experiment settings are presented in Appendix \ref{app:implementation} and \ref{app:benchmarks}.

We report the accuracy of each learned
task after the model has been trained on all tasks. Additionally, we present four aggregated metrics: \emph{Mean Final Accuracy} (MFN), \emph{Mean Average Accuracy} (MAA), \emph{Backward Transfer} (BWT) following SEFE~\citep{chen2025sefe} and \emph{Forward Transfer} (FWT) following ZSCL~\citep{zheng2023preventing}. This setting stresses both stability and plasticity during sequential adaptation. Definitions of these metrics are provided in Appendix \ref{app:metrics}.

EBLoRA achieves state-of-the-art performance on the four aggregated metrics in each of these settings, demonstrating that our approach consistently mitigates backward forgetting and enhances forward transfer. 

\paragraph{Plasticity of EBLoRA.} Importantly, as shown in Fig.~\ref{fig:radar_first}, EBLoRA retains the ability to continuously acquire new tasks without sacrificing plasticity, indicating that the improved stability does not come at the cost of adaptability.

\subsection{Why Balanced Energy Helps Continual Learning}
\label{sec:4.2}
To better understand why spectral balance facilitates continual learning, we conduct controlled interference experiments based on LoRA adapters trained on UCIT tasks (Fig.~\ref{fig:noise-stability}). In addition to comparing with standard LoRA, we include a stripped-down variant of our method, referred to as \textit{Energy-Balanced Optimization} (EBO), which applies only the $s\mathbf{UV}$ factorization and energy-balancing mechanism while removing the gradient-orthogonality constraint introduced by Eq.~\ref{eq:grad_ortho} and the depth-aware initialization described in Sec.~\ref{sec:method}. This ablation isolates the effect of spectral balance itself and allows us to analyze its impact independently of other components. In these experiments, each task-specific adapter is treated as a source of noise to other tasks, allowing us to examine how the imbalance of adaptation energy affects cross-task interference.

\paragraph{Effect of smoothing on interference.}
In the first experiment (Fig.~\ref{fig:noise-stability}, left), we evaluate performance on a target task by injecting LoRA adapters from the remaining tasks as perturbations. To modulate the degree of imbalance in these perturbations, we apply a partial smoothing transformation to the distribution of their singular values: for a smoothing ratio $\alpha\in[0,1]$, each singular value $\sigma_i$ is replaced by $(1-\alpha)\sigma_i + \alpha\bar{\sigma}$, where $\bar{\sigma}$ is the mean singular value. We observe that increasing the smoothing ratio consistently improves performance across all target tasks, indicating that adapters with balanced singular value spectra exhibit lower interference when merged with existing knowledge. This provides empirical evidence that the highly skewed energy distribution observed in standard LoRA artifacts directly contributes to disruptive cross-task interactions.

\begin{figure}[t]
    \centering
    \includegraphics[width=\linewidth]{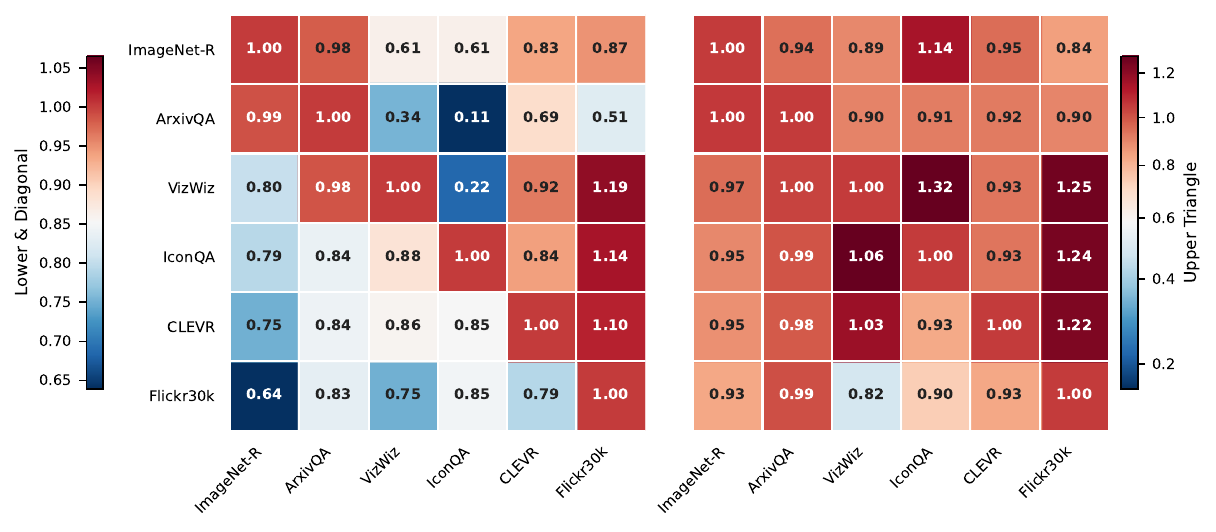}
    \caption{Heatmaps of LoRA (left) and EBO (right) in UCIT tasks. The upper triangle denotes the performance on unseen tasks, while the lower triangle denotes performance on seen tasks. Raw data for these visualizations can be found in Tab.~\ref{tab:ucit_loraft} and~\ref{tab:EBO_ucit}.}
    \label{fig:lora-ucit-heatmap}
    \vspace{-0.4cm}
\end{figure}

\paragraph{Effect of balanced updates on robustness.}
In the second experiment (Fig.~\ref{fig:noise-stability}, right), we compare standard LoRA with the proposed EBO under equal-norm perturbations. For a given target task, we incrementally add LoRA adapters from other tasks, scaling their perturbation norm to match the target update norm. While both methods suffer degradation under increasing interference, the decline for standard LoRA is substantially steeper. In contrast, EBO remains markedly more stable, suggesting that spectral balance increases the robustness of acquired knowledge to future interference.

\paragraph{Interpretation.}
Taken together, these results validate two complementary claims:
(i) updates with balanced distribution of adaptation energy produce reduced cross-task interference onto previously acquired knowledge, and
(ii) energy-balanced updates are themselves more robust to interference from subsequent tasks.
These phenomena align with our component-based perspective, where catastrophic forgetting arises from uncontrolled competition among dominant components. By regulating energy-balance among components, the process of continual learning becomes more stable without sacrificing adaptability.

\subsection{Ablation Study}
\label{sec:4.3}
In Fig.~\ref{fig:lora-ucit-heatmap}, we compare LoRA~\citep{hu2022lora} and EBO variant introduced in Sec.~\ref{sec:4.2} on UCIT. The diagonal reports the accuracy of each task immediately after it is learned; the lower triangle reflects the performance of previously learned tasks during the continual learning process (normalized w.r.t. the diagonal); and the upper triangle reflects generalization to unseen tasks during training (normalized w.r.t. zero-shot performance). This experiment is designed to isolate the effect of balanced energy on continual transfer by removing gradient-orthogonality and initialization components. As shown in Fig.~\ref{fig:lora-ucit-heatmap}, EBO produces noticeably higher values in the upper triangle and more stable values in the lower triangle compared to LoRA-FT, indicating reduced backward forgetting and improved forward transfer, respectively.

In Tab.~\ref{tab:ablation_ucit}, we further evaluate the contribution of individual components. EBO alone outperforms the other continual learning methods across all four aggregated metrics, demonstrating that energy-balanced updates substantially improve continual learning performance. Incorporating gradient orthogonality (GO) further reduces backward forgetting by steering updates away from previously sensitive modes. Finally, adding linear initialization (IL) improves forward transfer while preserving most of the stability gains, yielding the best overall performance.


\begin{table}[t]
\centering
\caption{Ablation results on UCIT. EBO denotes Energy-Balanced Optimization, GO denotes gradient orthogonality, and IL denotes the depth-aware initialization for $s_t$. The configuration without all three components corresponds to the LoRA-FT baseline.}
\label{tab:ablation_ucit}
\resizebox{\columnwidth}{!}{
\begin{tabular}{ccccccc}
\toprule
EBO & GO & IL & MFN$\uparrow$ & MAA$\uparrow$ & BWT$\uparrow$ & FWT$\uparrow$ \\
\midrule
\xmark & \xmark & \xmark & 61.4 & 76.5 & -15.4 & 26.8 \\
\midrule
\cmark & \xmark & \xmark & 70.2 & 82.2 & -5.1 & 34.3\\
\cmark & \cmark & \xmark & 72.1 & 82.5 & -2.6 & 34.0 \\
\cmark & \cmark & \cmark & \textbf{72.8} & \textbf{82.9} & \textbf{-2.0} & \textbf{34.6} \\
\bottomrule
\end{tabular}}
\end{table}

\paragraph{Free-spectrum variant.}
To further examine whether the improvement comes from balanced singular value spectra rather than merely from using an orthogonal low-rank parameterization, we compare EBO with a free-spectrum variant, denoted as $\mathbf{U_t}\mathbf{S_t}\mathbf{V_t^\top}$, on the UCIT benchmark. This variant keeps the same orthonormal bases $\mathbf{U_t}$ and $\mathbf{V_t}$ as EBO, but replaces the scalar update magnitude $s_t$ with a learnable diagonal matrix $\mathbf{S_t}$, allowing different components to receive different singular values. Therefore, its spectrum can become imbalanced during optimization.

As shown in Tab.~\ref{tab:free_spectrum_ablation}, the free-spectrum variant still clearly outperforms standard LoRA-FT in Tab.~\ref{tab:ablation_ucit}, suggesting that the orthogonal low-rank parameterization is itself beneficial. However, it underperforms EBO across all four aggregated metrics, indicating that explicitly enforcing a balanced singular value spectrum provides consistent gains beyond the orthogonal parameterization.

\begin{table}[t]
\centering
\caption{Comparison between EBO and the free-spectrum variant on the UCIT benchmark.}
\label{tab:free_spectrum_ablation}
\resizebox{\columnwidth}{!}{
\begin{tabular}{lcccc}
\toprule
Parameterization & MFN$\uparrow$ & MAA$\uparrow$ & BWT$\uparrow$ & FWT$\uparrow$ \\
\midrule
$\mathbf{U_t}\mathbf{S_t}\mathbf{V_t^\top}$ & 68.1 & 80.5 & -7.1 & 30.6 \\
$s_t\mathbf{U_t}\mathbf{V_t^\top}$ (EBO) & \textbf{70.2} & \textbf{82.2} & \textbf{-5.1} & \textbf{34.3} \\
\bottomrule
\end{tabular}}
\end{table}



%% file: related_work.tex
\section{Related Works}

Low-rank adaptation (LoRA) introduces trainable low-rank matrices to achieve parameter-efficient fine-tuning~\citep{hu2022lora}. Subsequent works further enhance its expressivity and optimization strategies. DoRA~\citep{liu2024dora} decouples the magnitude and direction of updates. StelLA~\citep{li2025stella} employs Riemannian optimization to balance the update speeds of the $\mathbf{U}$ and $\mathbf{V}$ subspaces. However, such variants overlook the need of models to acquire new knowledge while preserving prior abilities. To mitigate forgetting, existing continual learning methods can be roughly categorized into three types: replay-based, regularization-based and architecture-based~\citep{guo2025continuallearninggenerativeai}.

\paragraph{Replay-based approach.}
Replay-based continual learning revisits past data to approximate joint training, and has been widely explored in multimodal settings. Early works replay raw image--question--answer tuples to preserve the performance on proier tasks, such as VQACL~\citep{zhang2023vqacl} and MSPT~\citep{ijcai2024p688}. 
Recent works focus on selecting which samples to retain for replay, including clustering-based selection~\citep{lin2025vlmassistedcontinuallearningvisual,10890400}, OASIS~\citep{lee2025oasis} and Adapt-$\infty$~\citep{maharana2025adaptinfty}. Despite these advances, replay methods still require storing historical information, which poses storage and privacy concerns in real-world deployments.

\paragraph{Regularization-based approach.}
Regularization-based approaches alleviate forgetting by imposing constraints during training to preserve important parameters or representations from previous tasks. 
DAS~\citep{ke2023continual} introduces a domain-adaptive pretraining framework with soft-masking to regulate updates based on unit importance. ARPER~\citep{mi-etal-2020-continual} builds on elastic weight consolidation by dynamically penalizing parameters according to vocabulary shifts, focusing protection on knowledge-critical weights across tasks. 
SEFE~\citep{chen2025sefe} distinguishes superficial from essential forgetting and applies parameter-level regularization to selectively stabilize critical LoRA updates during continual instruction tuning. In the context of full fine-tuning, Gradient Projection Memory (GPM)~\citep{saha2021gradient} enforces orthogonality between new-task gradients and a stored basis of principal gradient directions from previous tasks. Orthogonality has also been adopted in parameter-efficient tuning. O-LoRA~\citep{wang2023orthogonal} constrains the LoRA subspaces of new tasks to be orthogonal to those of previous tasks. InfLoRA~\citep{liang2024inflora} and KeepLoRA~\citep{luo2026keeplora} constrain the LoRA down-projection matrix $\mathbf{A}$ to be orthogonal to GPM~\citep{saha2021gradient} or DualGPM~\citep{liang2023adaptive} to prevent interference.

\paragraph{Architecture-based approach.}
Architecture-based approaches freeze the pre-trained model and extend it with new parameters for each task. L2P~\citep{wang2022learning} selects the most relevant prompts from a prompt pool, while DualPrompt~\citep{wang2022dualprompt} uses explicit task-sharing and task-specific prompts. CODA-Prompt~\citep{smith2023coda} proposes end-to-end prompt selection methods to increase plasticity. MoE-Adapters~\citep{yu2024boosting} inserts a mixture of adapters into the image encoder, activating a subset for each task. DIKI~\citep{tang2024mind} calibrates knowledge integration by determining the likelihood that a test sample belongs to a learned task. IAP~\citep{fu2025iap} introduces Instance-Aware Gated Prompting to further improve prompt selection. CL-MoE~\citep{huai2025clmoe} employs a dual-router mixture-of-experts and momentum-based expert fusion to preserve task-specific knowledge during continual learning. HiDe-LLaVA~\citep{guo2025hidellava} hierarchically decomposes LoRA modules into general and task-specific components to reduce forgetting. However, these methods cannot entirely avoid parameter selection errors or suboptimal activation coefficients.\cite{gu2026spectralimbalancecausesforgetting}

%% file: conclusion.tex
\section{Conclusion}
This work is motivated by the observation that low-rank updates exhibit highly imbalanced singular value spectra, degrading previously acquired knowledge and making the updates more vulnerable to interference from subsequent tasks. Building on this insight, we proposed EBLoRA, which balances the distribution of adaptation energy in low-rank updates by decoupling the magnitude from directional structure, and constrains update directions to avoid interfering with previously acquired knowledge. Experiments show that EBLoRA significantly mitigates backward forgetting and improves forward transfer in continual learning settings while retaining a plasticity comparable to unconstrained LoRA on individual tasks. As a simple and effective method, EBLoRA provides a principled approach to low-rank continual adaptation that is compatible with standard deep-learning optimizers and scalable to larger models and diverse tasks.

%% file: appendix.tex
\newpage
\appendix
\onecolumn
\section{Proofs of Propositions}
\subsection{Proofs of Proposition~\ref{prop:tangent_optimality}}
\begin{proof}
Since $\mathcal T_{\mathbf U}\mathcal M_t$ is a linear subspace, the minimizer of
$\min_{\mathbf Y\in\mathcal T_{\mathbf U}\mathcal M_t}\|\mathbf Y-\mathbf Z\|_F^2$
is unique and is characterized by the orthogonality condition:
$\mathbf Y^\star$ is the orthogonal projection of $\mathbf Z$ onto $\mathcal T_{\mathbf U}\mathcal M_t$
if and only if $\mathbf Z-\mathbf Y^\star$ is orthogonal to $\mathcal T_{\mathbf U}\mathcal M_t$
under $\langle \mathbf A,\mathbf B\rangle=\mathrm{tr}(\mathbf A^\top\mathbf B)$.

We first verify feasibility.
Let $\mathbf Y^\star=\mathbf Z_0-\mathbf U\,\mathrm{sym}(\mathbf U^\top\mathbf Z_0)$.
By construction, $\mathbf G^\top\mathbf Z_0=\mathbf 0$.
Moreover, since $\mathbf G^\top\mathbf U=\mathbf 0$ (because $\mathbf U\in\mathcal M_t$),
we have $\mathbf G^\top\mathbf Y^\star=\mathbf 0$.
Next,
\[
\mathbf U^\top\mathbf Y^\star+\mathbf Y^{\star\top}\mathbf U
=
\mathbf U^\top\mathbf Z_0+\mathbf Z_0^\top\mathbf U
-
2\,\mathrm{sym}(\mathbf U^\top\mathbf Z_0)
=\mathbf 0,
\]
so $\mathbf Y^\star\in\mathcal T_{\mathbf U}\mathcal M_t$.

It remains to prove optimality via orthogonality.
Let the residual be $\mathbf R=\mathbf Z-\mathbf Y^\star$.
Using $\mathbf Z_0=(\mathbf I-\mathbf G\mathbf G^\top)\mathbf Z$, we can write
\begin{equation*}
\mathbf R
=
\mathbf Z-\mathbf Z_0+\mathbf U\,\mathrm{sym}(\mathbf U^\top\mathbf Z_0)
=
\mathbf G\mathbf G^\top\mathbf Z+\mathbf U\,\mathbf S,
\quad
\mathbf S=\mathrm{sym}(\mathbf U^\top\mathbf Z_0),
\end{equation*}
where $\mathbf S$ is symmetric.

Take any $\mathbf Y\in\mathcal T_{\mathbf U}\mathcal M_t$, i.e.,
$\mathbf G^\top\mathbf Y=\mathbf 0$ and $\mathrm{sym}(\mathbf U^\top\mathbf Y)=\mathbf 0$.
We show $\langle \mathbf R,\mathbf Y\rangle=0$ by splitting the residual:
\begin{equation*}
\langle \mathbf G\mathbf G^\top\mathbf Z,\mathbf Y\rangle
=
\mathrm{tr}\!\left(\mathbf Z^\top\mathbf G\mathbf G^\top\mathbf Y\right)
=
\mathrm{tr}\!\left(\mathbf Z^\top\mathbf G(\mathbf G^\top\mathbf Y)\right)
=0,
\end{equation*}
and we have:
\begin{equation*}
\langle \mathbf U\mathbf S,\mathbf Y\rangle
=
\mathrm{tr}\!\left(\mathbf S^\top\mathbf U^\top\mathbf Y\right)
=
\mathrm{tr}\!\left(\mathbf S\,\mathbf U^\top\mathbf Y\right).
\end{equation*}
Decompose $\mathbf U^\top\mathbf Y$ into symmetric and skew-symmetric parts:
$\mathbf U^\top\mathbf Y=\mathrm{sym}(\mathbf U^\top\mathbf Y)+\mathrm{skew}(\mathbf U^\top\mathbf Y)$.
Since $\mathbf S$ is symmetric,   we claim  for any skew-symmetric matrices $B$ that $\mathrm{tr}(\mathbf S \mathbf{B})=0$ through following process:
\begin{equation*}
\mathrm{tr}(\mathbf S \mathbf{B})= \mathrm{tr}(\mathbf{B}^{\top} \mathbf{S}^{\top}) = -\mathrm{tr}(\mathbf{B} \mathbf{S}) = -\mathrm{tr}(\mathbf{S} \mathbf{B})
\end{equation*}
Combine with fact that $\mathrm{sym}(\mathbf U^\top\mathbf Y)=0$ for all $\mathbf Y\in\mathcal T_{\mathbf U}\mathcal M_t$,
we obtain $\langle \mathbf U\mathbf S,\mathbf Y\rangle=0$.
Therefore $\langle \mathbf R,\mathbf Y\rangle=0$ for all $\mathbf Y\in\mathcal T_{\mathbf U}\mathcal M_t$,
i.e., $\mathbf R\perp \mathcal T_{\mathbf U}\mathcal M_t$.

Hence, $\mathbf Y^\star$ is the orthogonal projection of $\mathbf Z$ onto $\mathcal T_{\mathbf U}\mathcal M_t$,
and it is the unique minimizer of the stated problem.
\end{proof}

\subsection{Proofs of Proposition~\ref{prop:retraction_optimality}}
\begin{proof}
We first verify feasibility. By construction,
\[
\mathbf Y = (\mathbf I-\mathbf G\mathbf G^\top)\widetilde{\mathbf U}
\]
lies in the null space of $\mathbf G^\top$, since
\[
\mathbf G^\top\mathbf Y
= \mathbf G^\top(\mathbf I-\mathbf G\mathbf G^\top)\widetilde{\mathbf U}
= (\mathbf G^\top - \mathbf G^\top\mathbf G\mathbf G^\top)\widetilde{\mathbf U}
= \mathbf 0,
\]
where we used $\mathbf G^\top\mathbf G=\mathbf I$. The retracted point is defined as
\[
\mathbf U^{+} = \mathbf Y(\mathbf Y^\top\mathbf Y)^{-\frac12},
\]
which satisfies $\mathbf U^{+\top}\mathbf U^{+} = \mathbf I_r$ by symmetry and positive-definiteness of $\mathbf Y^\top\mathbf Y$. Moreover, since right multiplication by an invertible $r\times r$ matrix preserves column spans, $\mathrm{Col}(\mathbf U^{+})=\mathrm{Col}(\mathbf Y)\subseteq\mathrm{Null}(\mathbf G^\top)$, hence $\mathbf G^\top\mathbf U^{+}=0$. Therefore $\mathbf U^{+}\in\mathcal M_t$.

We now establish optimality. For any $\mathbf U\in\mathcal M_t$ we have $\mathbf P_{\perp\mathbf G}\mathbf U=\mathbf U$, where $\mathbf P_{\perp\mathbf G} = \mathbf I-\mathbf G\mathbf G^\top$ is the orthogonal projector onto $\mathrm{Null}(\mathbf G^\top)$, and
\[
\widetilde{\mathbf U}
= \mathbf P_{\perp\mathbf G}\widetilde{\mathbf U}
+ \mathbf G\mathbf G^\top\widetilde{\mathbf U}
= \mathbf Y + \mathbf G\mathbf G^\top\widetilde{\mathbf U}.
\]
Since $\mathbf U-\mathbf Y\in\mathrm{Null}(\mathbf G^\top)$ and $\mathbf G\mathbf G^\top\widetilde{\mathbf U}\in\mathrm{Col}(\mathbf G)$ are orthogonal subspaces, we obtain the Pythagorean identity
\[
\|\mathbf U - \widetilde{\mathbf U}\|_F^2
= \|\mathbf U - \mathbf Y\|_F^2 + \|\mathbf G\mathbf G^\top\widetilde{\mathbf U}\|_F^2.
\]
The second term does not depend on $\mathbf U$, hence
\[
\arg\min_{\mathbf U\in\mathcal M_t}\|\mathbf U - \widetilde{\mathbf U}\|_F^2
=
\arg\min_{\mathbf U\in\mathcal M_t}\|\mathbf U - \mathbf Y\|_F^2.
\]

We now minimize $\|\mathbf U-\mathbf Y\|_F^2$ over $\mathbf U$ satisfying only the orthogonality constraint $\mathbf U^\top\mathbf U=\mathbf I_r$. Expanding
\[
\|\mathbf U-\mathbf Y\|_F^2
= \|\mathbf U\|_F^2 + \|\mathbf Y\|_F^2 - 2\,\mathrm{tr}(\mathbf U^\top\mathbf Y),
\]
and noting $\|\mathbf U\|_F^2=r$ is constant on the Stiefel manifold, the problem is equivalent to maximizing $\mathrm{tr}(\mathbf U^\top\mathbf Y)$. Let the thin SVD of $\mathbf Y$ be
\[
\mathbf Y = \mathbf Q\mathbf\Sigma\mathbf P^\top,
\]
where $\mathbf Q\in\mathbb R^{d\times r}$ and $\mathbf P\in\mathbb R^{r\times r}$ are orthogonal, and $\mathbf\Sigma=\mathrm{diag}(\sigma_1,\dots,\sigma_r)$ with $\sigma_i>0$. Then
\[
\mathrm{tr}(\mathbf U^\top\mathbf Y)
= \mathrm{tr}(\mathbf\Sigma\,\mathbf Z),
\quad
\mathbf Z = \mathbf Q^\top\mathbf U\mathbf P.
\]
Since $\mathbf Z$ is orthogonal whenever $\mathbf U$ is, von Neumann's trace inequality implies
\[
\mathrm{tr}(\mathbf\Sigma\,\mathbf Z)
\le \sum_{i=1}^r \sigma_i,
\]
with equality if and only if $\mathbf Z=\mathbf I_r$. The maximizing point is therefore $\mathbf U^{*}=\mathbf Q\mathbf P^\top$, and since
\[
\mathbf Y(\mathbf Y^\top\mathbf Y)^{-\frac12}
= \mathbf Q\mathbf\Sigma\mathbf P^\top(\mathbf P\mathbf\Sigma^2\mathbf P^\top)^{-\frac12}
= \mathbf Q\mathbf P^\top,
\]
we have $\mathbf U^{+}=\mathbf U^{*}$.

Finally, since $\mathbf U^{+}$ also satisfies $\mathbf G^\top\mathbf U^{+}=0$, it is feasible for $\mathcal M_t$, and no smaller objective value can be attained over the subset $\mathcal M_t\subseteq \mathrm{St}(d,r)$. Uniqueness follows from strict positivity of $\mathbf\Sigma$ and the equality condition of von Neumann's inequality. Therefore $\mathbf U^{+}$ is the unique minimizer of the stated problem.
\end{proof}

\section{Experiment Details}
We adopt the LLaVA-1.5-7B~\citep{liu2023visual} model for multimodal continual instruction tuning experiments. Training is performed on $4\times$ NVIDIA RTX PRO6000 Blackwell GPUs. Appendix~\ref{app:implementation} provides further details of experiment settings and hyperparameters. Appendix~\ref{app:benchmarks} introduces the datasets used in our experiments. Appendix~\ref{app:metrics} describes the evaluation metrics for continual learning together with their definitions and computation formulas.

\subsection{Implementation Details}
\label{app:implementation}

\paragraph{Experiments on continual learning benchmarks.}
We integrate our projected optimization into AdamW by applying a pre-step gradient projection and a post-step update projection, followed by a retraction. For the MLLM-DCL benchmark, we set the learning rate to $2\times 10^{-5}$ and train for up to $5$ epochs per task. For the UCIT benchmark, the learning rate is set to $2\times 10^{-4}$ for all tasks except Flickr30k, which uses $1\times 10^{-4}$ and train for up to $3$ epochs per task. 

We present the details of the depth-aware initialization for $s_t$ mentioned in Sec.~\ref{sec:method2} here. For each layer $\ell \in \{1,\dots,L\}$, we initialize $s_t^{(\ell)}$ as following:
\begin{equation}
s_t^{(\ell,0)}
= s_{\min}^{(0)}
+ \frac{\ell - 1}{L - 1}\,\bigl(s_{\max}^{(0)} - s_{\min}^{(0)}\bigr),
\label{eq:linear_s_init}
\end{equation}
where $s_{\min}^{(0)}$ and $s_{\max}^{(0)}$ specify the initial scales for the shallowest and deepest layers, respectively. In our experiments, we set $s_{\min}^{(0)}=0.002$ and $s_{\max}^{(0)}=0.010$. This depth-aware initialization strategy stabilizes early training and improves convergence in practice.

\paragraph{Experiments on model merging tasks.}
For the UCIT and MM-MergeBench settings, we follow the settings of direct merge. We fine-tune a separate LoRA adapter for each task individually for up to 3 epochs per task and sequentially add the LoRA weights from each adapter to the base model, yielding a model that aggregates task-specific low-rank updates. For the multimodal projector (\texttt{mm\_projector}), we compute the element-wise mean across the participating adapters and replace the projector parameters with this averaged value. This produces a multi-task model capable of handling all merged tasks.

\begin{algorithm*}[t]
\caption{Structured Continual Learning with Restricted-Manifold Updates}
\label{alg:overall_cl}
\begin{algorithmic}[1]
\REQUIRE Pre-trained weights $W_0$; stream of $T$ tasks
rank $r$; optimizer \textsc{Opt}.

\STATE Initialize previous task gradient subspace $\mathbf{G}_0 \leftarrow \mathbf{0}$.
\FOR{$t = 1$ to $T$}
    \STATE \textbf{// Gradient snapshot for task $\mathcal{T}_t$}
    \STATE Sample a small subset of mini-batches from task $t$ and form a gradient estimation $\mathcal{G}_t$ for initialization 
    \STATE \textbf{// Projected low-rank initialization of $\Delta W_t$}
    \STATE $\mathbf{P}_{\perp \mathbf{G}_{t-1}} \leftarrow \mathbf{I} - \mathbf{G}_{t-1}\mathbf{G}_{t-1}^\top$
    \STATE $\mathcal{G}^{\text{proj}}_t \leftarrow \mathbf{P}_{\perp \mathbf{G}_{t-1}}\mathcal{G}_t$
    \STATE Compute SVD $\mathcal{G}^{\text{proj}}_t = U \Sigma V^\top$ and initialize
           $U_t^{(0)} \leftarrow U_{:,1:r}$,\;
           $V_t^{(0)} \leftarrow V_{:,1:r}$.
    \STATE Initialize $s_t^{(0)}$ with  linear depth-dependent initialization.

    \STATE \textbf{// Optimization on the restricted manifold}
    \STATE Take $\mathbf{G}_{t-1}$ as the constraint and use Algorithm~\ref{alg:one_step_U} to update $\mathbf{U}_t$ (with analogous projected updates for $\mathbf{V}_t$ and Euclidean updates for $s_t$) to minimize $\mathcal{L}_t\!\left(\mathbf W_{t-1} + s_t \mathbf U_t \mathbf V_t^\top\right)$ starting from $(s_t^{(0)}, \mathbf U_t^{(0)}, \mathbf V_t^{(0)})$.

    \STATE \textbf{// Apply the learned low-rank update}
    \STATE $\mathbf W_t \leftarrow \mathbf W_{t-1} + s_t \mathbf U_t \mathbf V_t^\top$.

    \STATE \textbf{// Update previous-task knowledge subspace}
    \STATE Use GPM to update $\mathbf{G}_{t-1}$ with $\mathcal{G}_{t}$ to obtain $\mathbf{G}_{t}$

\ENDFOR
\end{algorithmic}
\end{algorithm*}

\subsection{Benchmark}
\label{app:benchmarks}
\textbf{MLLM-DCL }benchmark~\citep{zhao2025mllmcl} consists of multiple downstream VQA datasets: RSVQA~\citep{lobry2020rsvqa}, PathVQA~\citep{he2020pathvqa}, DriveLM~\citep{sima2024drivelm}, FinVis~\citep{wang2023finvisgpt}, AI2D~\citep{kembhavi2016diagram}, SciVerse~\citep{guo2025sciverse}, MapQA ~\citep{chang2022mapqa}, and TQA~\citep{kembhavi2017sixthgrader}. It covers 5 specialized areas: Remote Sensing, Medical, Driving, Finance, and Science. Each area is treated as a task.

\textbf{UCIT} benchmark~\citep{guo2025hidellava} consists of 6 VQA datasets: ArxivQA~\citep{li2024arxiv}, CLEVR-Math~\citep{lindstrom2022clevrmath}, IconQA~\citep{lu2021iconqa}, ImageNet-R~\citep{hendrycks2021manyfaces}, VizWiz-Caption~\citep{gurari2018vizwiz}, and Flickr30k~\citep{plummer2015flickr30k}. Each dataset is treated as a task.

\textbf{MM-MergeBench}~\cite{zeng2025parameter} benchmark contains 8 multimodal datasets as seen tasks: ScienceQA~\cite{lu2022learn}, ImageNet~\cite{deng2009imagenet}, VQAv2~\cite{goyal2017making}, REC-COCO~\cite{kazemzadeh2014referitgame, mao2016generation}, OCRVQA~\cite{mishra2019ocr}, Flickr30k~\cite{plummer2015flickr30k}, VizWiz-caption~\cite{gurari2018vizwiz} and IconQA~\cite{lu2021iconqa}. Each dataset is treated as a task.

\subsection{Evaluation Metrics}
\label{app:metrics}


We present the definitions and mathematical expressions of the four aggregate metrics employed in our evaluation: Mean Final Accuracy (MFN), Mean Average Accuracy (MAA), Backward Transfer (BWT), and Forward Transfer (FWT).

MFN measures the average accuracy across all tasks after the model has completed the entire incremental training sequence. This metric reflects the model’s final performance after all tasks have been learned. It is defined as:
\begin{equation}
\mathrm{MFN} = \frac{1}{T} \sum_{i=1}^{T} A_{T,i},
\end{equation}
where $A_{T,i}$ represents the accuracy on task $i$ after learning all $T$ tasks.

MAA provides a comprehensive measure of the model’s performance throughout the entire training process. It is the mean of the average accuracies on all learned tasks after each incremental training step. The formula for MAA is:
\begin{equation}
\mathrm{MAA} = \frac{1}{T} \sum_{j=1}^{T} \left( \frac{1}{j} \sum_{i=1}^{j} A_{j,i} \right),
\end{equation}
where $A_{j,i}$ denotes the accuracy on task $i$ after the model has been trained on the first $j$ tasks.

BWT assesses the extent of forgetting by measuring the difference in accuracy for each task between the final training step and immediately after the task was learned. It is calculated as:
\begin{equation}
\mathrm{BWT} = \frac{1}{T} \sum_{i=1}^{T} \left( A_{T,i} - A_{i,i} \right),
\end{equation}
A negative BWT value indicates forgetting, with larger negative values implying greater forgetting.

FWT quantifies the model's ability to generalize learned knowledge to future unseen tasks. Formally, it is defined as:
\begin{equation}
\mathrm{FWT} = \frac{1}{T-1} \sum_{i=2}^{T} \left( \frac{1}{i-1} \sum_{j=1}^{i-1} A_{j,i} \right),
\end{equation}
where $A_{j,i}$ denotes the accuracy on task $i$ after the model has been trained on the first $j$ tasks.

Additionally, we present Average Accuracy (AVG) for the evaluation in Appendix~\ref{app:sup_exp}. AVG is the overall mean accuracy of both seen and unseen tasks through all training steps. It reflects the overall performance and the stability of the training process. AVG is defined as:
\begin{equation}
\mathrm{AVG} = \frac{1}{T^2} \sum_{j=1}^{T} \sum_{i=1}^{T} A_{j,i}.
\end{equation}

\subsection{Training Cost}
\label{app:training_cost}

We further report the computational and memory overhead of EBLoRA compared with standard LoRA. 
Training time is measured as the average wall-clock time per iteration, and memory usage is measured as the peak GPU memory during training. 
To summarize the overall performance gain in a single scalar, we report the mean aggregate score, computed as the arithmetic mean of MFN, MAA, BWT, and FWT. Since all four metrics are higher-is-better, a larger mean aggregated score indicates better overall continual learning performance.

\begin{table}[t]
\centering\small
\caption{Training cost comparison between LoRA and EBLoRA on UCIT and MLLM-DCL. Training time is measured in seconds per iteration. Memory usage denotes peak GPU memory.}
\label{tab:training_cost}
\resizebox{0.7\columnwidth}{!}{
\begin{tabular}{llcc}
\toprule
Metric & Method & UCIT & MLLM-DCL \\
\midrule
Training time (s/it) & LoRA & 3.28 & 3.62 \\
Training time (s/it) & EBLoRA & 4.20 & 4.51 \\
Training time ratio & EBLoRA / LoRA & 1.28$\times$ & 1.25$\times$ \\
\midrule
Memory usage (GB) & LoRA & 32.31 & 35.52 \\
Memory usage (GB) & EBLoRA & 35.53 & 37.42 \\
Memory usage ratio & EBLoRA / LoRA & 1.10$\times$ & 1.05$\times$ \\
\midrule
Mean aggregated score (\%) & LoRA & 37.33 & 34.40 \\
Mean aggregated score (\%) & EBLoRA & 47.07 & 42.15 \\
Mean aggregated score ratio & EBLoRA / LoRA & 1.26$\times$ & 1.23$\times$ \\
\bottomrule
\end{tabular}}
\end{table}

As shown in Tab.~\ref{tab:training_cost}, EBLoRA introduces moderate overhead compared with LoRA, requiring about $1.25\times$ training time and $1.05$--$1.10\times$ GPU memory. Meanwhile, it yields substantially larger continual learning gains, achieving about $1.23$--$1.26\times$ improvement in the mean aggregate score. These results indicate that the additional cost of projected optimization and retraction is moderate relative to the performance improvement.

\paragraph{Newton--Schulz orthogonalization.}
The whitening-based retraction used in Eq.~\ref{eq:retract_whiten} maps a tentative update back to the restricted Stiefel manifold by
\begin{equation}
\mathbf{Y} = \mathbf{P}_{\perp \mathbf{G}}\widetilde{\mathbf{U}},
\qquad
\mathbf{U}^{+}
=
\mathbf{Y}(\mathbf{Y}^{\top}\mathbf{Y})^{-\frac{1}{2}},
\label{eq:whitening_retraction_app}
\end{equation}
where $\mathbf{P}_{\perp \mathbf{G}}=\mathbf{I}-\mathbf{G}\mathbf{G}^{\top}$. 
Although this operation is performed on a small $r\times r$ matrix, computing the inverse square root still introduces additional overhead. 
We therefore evaluate an alternative implementation based on Newton--Schulz orthogonalization~\citep{Schulz1933IterativeBD, higham1986polar}, which approximates the polar factor using matrix multiplications instead of explicit eigendecomposition.

Given the projected matrix $\mathbf{Y}=\mathbf{P}_{\perp \mathbf{G}}\widetilde{\mathbf{U}}$, we initialize
\begin{equation}
\mathbf{X}_0 = \frac{\mathbf{Y}}{\|\mathbf{Y}\|_F},
\end{equation}
and iteratively apply
\begin{equation}
\mathbf{X}_{k+1}
=
\frac{1}{2}\mathbf{X}_k
\left(3\mathbf{I}_r-\mathbf{X}_k^{\top}\mathbf{X}_k\right).
\label{eq:newton_schulz}
\end{equation}
This iteration pushes the singular values of $\mathbf{X}_k$ toward one, so that $\mathbf{X}_k^{\top}\mathbf{X}_k\approx \mathbf{I}_r$ after a small number of iterations. 
Moreover, since every iterate $\mathbf{X}_k$ is obtained from $\mathbf{Y}$ through right multiplication by an $r\times r$ matrix, its column space remains within $\mathrm{Col}(\mathbf{Y})\subseteq \mathrm{Null}(\mathbf{G}^{\top})$ under exact arithmetic. 
Thus, Newton--Schulz orthogonalization approximately enforces the Stiefel constraint while preserving the gradient-orthogonality constraint.

\begin{table}[!t]
\centering
  \caption{Accuracy of EBLoRA with Newton--Schulz orthogonalization on the UCIT benchmark. Each row represents the performance on every dataset of the model trained after the corresponding task. \colorbox{transfer}{FWT}, \colorbox{average}{AVG}, and \colorbox{last}{MFN} metrics are shown.}
  \label{tab:eblora_newton_schulz_ucit}
    \begin{tabular}
    {b{4.2em}b{0.9em}b{0.9em}b{0.9em}b{0.9em}b{0.9em}b{0.9em}b{0.9em}}
    \toprule
          & \rotatebox{45}{ImgNet-R} &  \rotatebox{45}{ArxivQA} & \rotatebox{45}{VizWiz} & \rotatebox{45}{IconQA} & \rotatebox{45}{CLEVR} & \rotatebox{45}{Flickr30k} &  \\
    \midrule
          Transfer &  & \makebox[0.9em]{48.7}  & \makebox[0.9em]{33.9} & \makebox[0.9em]{25.1} & \makebox[0.9em]{19.1} & \makebox[0.9em]{44.8} & \makebox[0.9em]{{\cellcolor{transfer}}34.3}  \\
    \midrule
          ImgNet-R & \makebox[0.9em]{{\cellcolor{diag}}90.5} & \makebox[0.9em]{48.7} & \makebox[0.9em]{33.9} & \makebox[0.9em]{22.1} & \makebox[0.9em]{19.4} & \makebox[0.9em]{33.7} \\
          ArxivQA & \makebox[0.9em]{90.6} & \makebox[0.9em]{{\cellcolor{diag}}93.5} & \makebox[0.9em]{33.9} & \makebox[0.9em]{25.1} & \makebox[0.9em]{19.7} & \makebox[0.9em]{34.1}\\
          VizWiz & \makebox[0.9em]{89.9} & \makebox[0.9em]{94.1} & \makebox[0.9em]{{\cellcolor{diag}}61.5} & \makebox[0.9em]{28.2} & \makebox[0.9em]{19.2} & \makebox[0.9em]{52.3}\\
          IconQA & \makebox[0.9em]{90.0} & \makebox[0.9em]{94.1} & \makebox[0.9em]{61.9} & \makebox[0.9em]{{\cellcolor{diag}}82.1} & \makebox[0.9em]{18.3} & \makebox[0.9em]{52.1}\\
          CLEVR & \makebox[0.9em]{89.9} & \makebox[0.9em]{93.6} & \makebox[0.9em]{61.2} & \makebox[0.9em]{77.1} & \makebox[0.9em]{{\cellcolor{diag}}69.4} & \makebox[0.9em]{52.1} \\
          Flickr30k & \makebox[0.9em]{88.4} & \makebox[0.9em]{94.0} & \makebox[0.9em]{47.4} & \makebox[0.9em]{77.5} & \makebox[0.9em]{66.2} & \makebox[0.9em]{\cellcolor{diag}56.3} & \makebox[0.9em]{{\cellcolor{last}}71.6}  \\
    \midrule
          Average & \makebox[0.9em]{89.9} & \makebox[0.9em]{86.3} & \makebox[0.9em]{50.0} & \makebox[0.9em]{52.0} & \makebox[0.9em]{35.4} & \makebox[0.9em]{46.7} & \makebox[0.9em]{{\cellcolor{average}}60.1}\\
    \bottomrule
    \end{tabular}
\end{table}

Compared with the original whitening-based implementation in Tab.~\ref{tab:ucit_eblora}, Newton--Schulz orthogonalization yields slightly lower final performance overall, but leads to faster training in practice. Its training time per iteration is about $0.85\times$ that of the whitening-based EBLoRA implementation and about $1.05\times$ that of standard LoRA. This suggests that Newton--Schulz orthogonalization is a viable alternative when training efficiency is preferred.

\section{Details of Gradient Projection Memory}
\label{app:gpm}

GPM~\citep{saha2021gradient} is a well-established method that maintains a subspace of task-sensitive gradient directions for mitigating interference. In our method, GPM is used to accumulate gradient information from past tasks. For clarity and completeness, we detail the procedure in this section.

\paragraph{Initialization.}
Since no prior tasks exist at $t=1$, the subspace is initialized as
$\mathbf{G}_{0} = \mathbf{0}.
$

\paragraph{Gradient collection.}
For task $t$, we compute an aggregated gradient snapshot $\mathcal{G}_t$ and project it onto the orthogonal complement of the stored subspace:
$\mathcal{G}_t^{\perp} = (\mathbf{I}-\mathbf{G}_{t-1}\mathbf{G}_{t-1}^\top)\mathcal{G}_t.
$

\paragraph{Subspace update.}
Let the SVD of $\mathcal{G}_t^{\perp}$ be $\mathcal{G}_t^{\perp} = \mathbf{U}\boldsymbol{\Sigma}\mathbf{V}^\top$ with singular values $\{\sigma_i\}$. Following~\citep{saha2021gradient}, the smallest $r_t$ is retained such that the selected components capture an $\epsilon$-fraction of the gradient energy:
\[
\Big\|\mathbf{G}_{t-1}^\top \mathcal{G}_t \Big\|_F^2 + \sum_{i=1}^{r_t} \sigma_i^2 \ge \epsilon \|\mathcal{G}_t\|_F^2,
\]
where $\epsilon = 0.95$ is a threshold hyperparameter. The corresponding basis vectors are appended:
$
\mathbf{G}_t = \big[\, \mathbf{G}_{t-1} \;\; \mathbf{U}_{:,1:r_t} \,\big].
$

\paragraph{Usage.}
For subsequent tasks, the stored subspace imposes the constraint
$
\mathbf{G}_{t-1}^{\top}\mathbf{U}_t = \mathbf{0},
$
which prevents new updates from interfering with previously important gradient modes.

\section{Additional Experiments}
\label{app:sup_exp}
We present the detailed per-training-step accuracies through all training steps in Tab.~\ref{tab:EBO_ucit},~\ref{tab:EBO_dcl} ,~\ref{tab:methods_ucit} and ~\ref{tab:methods_dcl}. Results in Tab.~\ref{tab:methods_ucit} and~\ref{tab:methods_dcl} demonstrate strong performance of EBLoRA in terms of both learning plasticity and stability, while results in Tab.~\ref{tab:EBO_ucit} and ~\ref{tab:EBO_dcl} confirms the efficacy of our proposed EBO, supporting our claim that maintaining balanced singular value spectra reduces cross-task interference during the learning process.

\makeatletter
\begin{table}[H]
\centering
  \caption{Accuracy of EBO on the UCIT benchmark. Each row represents the performance on every dataset of the model trained after the corresponding task. \colorbox{transfer}{FWT}, \colorbox{average}{AVG}, and \colorbox{last}{MFN} metrics are shown.}
  \label{tab:EBO_ucit}
    \begin{tabular}
    {b{4.2em}b{0.9em}b{0.9em}b{0.9em}b{0.9em}b{0.9em}b{0.9em}b{0.9em}}
    \toprule
          & \rotatebox{45}{ImgNet-R} &  \rotatebox{45}{ArxivQA} & \rotatebox{45}{VizWiz} & \rotatebox{45}{IconQA} & \rotatebox{45}{CLEVR} & \rotatebox{45}{Flickr30k} &  \\
    \midrule
          Transfer &  & \makebox[0.9em]{50.5}  & \makebox[0.9em]{34.5} & \makebox[0.9em]{21.6} & \makebox[0.9em]{19.2} & \makebox[0.9em]{45.7} & \makebox[0.9em]{{\cellcolor{transfer}}34.3}  \\
    \midrule
          ImgNet-R & \makebox[0.9em]{{\cellcolor{diag}}90.6} & \makebox[0.9em]{50.5} & \makebox[0.9em]{34.2} & \makebox[0.9em]{21.9} & \makebox[0.9em]{19.6} & \makebox[0.9em]{35.4} \\
          ArxivQA & \makebox[0.9em]{90.9} & \makebox[0.9em]{{\cellcolor{diag}}94.1} & \makebox[0.9em]{34.7} & \makebox[0.9em]{17.5} & \makebox[0.9em]{18.9} & \makebox[0.9em]{37.6}\\
          VizWiz & \makebox[0.9em]{88.0} & \makebox[0.9em]{93.7} & \makebox[0.9em]{{\cellcolor{diag}}61.0} & \makebox[0.9em]{25.3} & \makebox[0.9em]{19.2} & \makebox[0.9em]{52.3}\\
          IconQA & \makebox[0.9em]{86.4} & \makebox[0.9em]{92.8} & \makebox[0.9em]{64.9} & \makebox[0.9em]{{\cellcolor{diag}}82.8} & \makebox[0.9em]{19.2} & \makebox[0.9em]{51.9}\\
          CLEVR & \makebox[0.9em]{85.9} & \makebox[0.9em]{92.5} & \makebox[0.9em]{63.1} & \makebox[0.9em]{77.2} & \makebox[0.9em]{{\cellcolor{diag}}67.3} & \makebox[0.9em]{51.2} \\
          Flickr30k & \makebox[0.9em]{84.6} & \makebox[0.9em]{93.0} & \makebox[0.9em]{49.7} & \makebox[0.9em]{74.9} & \makebox[0.9em]{62.8} & \makebox[0.9em]{\cellcolor{diag}55.9} & \makebox[0.9em]{{\cellcolor{last}}70.2}  \\
    \midrule
          Average & \makebox[0.9em]{87.7} & \makebox[0.9em]{86.1} & \makebox[0.9em]{51.2} & \makebox[0.9em]{49.9} & \makebox[0.9em]{34.5} & \makebox[0.9em]{47.4} & \makebox[0.9em]{{\cellcolor{average}}59.5}\\
    \bottomrule
    \end{tabular}
\end{table}

\begin{table}[H]
\centering
  \caption{Accuracy of EBO on the MLLM-DCL benchmark. Each row represents the performance on every dataset of the model trained after the corresponding task. \colorbox{transfer}{FWT}, \colorbox{average}{AVG}, and \colorbox{last}{MFN} metrics are shown.}
  \label{tab:EBO_dcl}
    \begin{tabular}
    {b{4.2em}b{0.9em}b{0.9em}b{0.9em}b{0.9em}b{0.9em}b{0.9em}b{0.9em}}
    \toprule
          & \rotatebox{45}{Scesing} &  \rotatebox{45}{Medical} & \rotatebox{45}{Driving} & \rotatebox{45}{Science} & \rotatebox{45}{Finance} &  \\
    \midrule
          Transfer &  & \makebox[0.9em]{29.4}  & \makebox[0.9em]{19.1} & \makebox[0.9em]{32.6} & \makebox[0.9em]{50.3} & \makebox[0.9em]{{\cellcolor{transfer}}32.5}  \\
    \midrule
          Sensing & \makebox[0.9em]{{\cellcolor{diag}}80.7} & \makebox[0.9em]{29.4} & \makebox[0.9em]{19.5} & \makebox[0.9em]{34.4} & \makebox[0.9em]{57.1} \\
          Medical & \makebox[0.9em]{73.6} & \makebox[0.9em]{{\cellcolor{diag}}62.0} & \makebox[0.9em]{18.7} & \makebox[0.9em]{31.6} & \makebox[0.9em]{50.9}\\
          Driving & \makebox[0.9em]{77.2} & \makebox[0.9em]{58.5} & \makebox[0.9em]{{\cellcolor{diag}}54.0} & \makebox[0.9em]{31.9} & \makebox[0.9em]{47.2}\\
          Science & \makebox[0.9em]{78.6} & \makebox[0.9em]{49.1} & \makebox[0.9em]{47.3} & \makebox[0.9em]{{\cellcolor{diag}}53.9} & \makebox[0.9em]{46.0}\\
          Finance & \makebox[0.9em]{77.2} & \makebox[0.9em]{50.9} & \makebox[0.9em]{41.7} & \makebox[0.9em]{51.0} & \makebox[0.9em]{\cellcolor{diag}89.5} & \makebox[0.9em]{{\cellcolor{last}}62.1}  \\
    \midrule
          Average & \makebox[0.9em]{77.5} & \makebox[0.9em]{50.0} & \makebox[0.9em]{36.2} & \makebox[0.9em]{40.6} & \makebox[0.9em]{58.1} & \makebox[0.9em]{{\cellcolor{average}}52.5}\\
    \bottomrule
    \end{tabular}
\end{table}

\begin{table*}[t]
  \centering
  \caption{Accuracy of LoRA-FT, O-LoRA, CL-MoE, SEFE, KeepLoRA and EBLoRA on the UCIT benchmark. Each row represents the performance on every dataset of the model trained after the corresponding task. \colorbox{transfer}{FWT}, \colorbox{average}{AVG}, and \colorbox{last}{MFN} metrics are shown.}
  \label{tab:methods_ucit}
  \captionsetup[subtable]{justification=centering} 
  
  \vspace{0.8\baselineskip}
  \begin{subtable}[t]{0.49\textwidth}
    \centering
    \caption{LoRA-FT}
    \label{tab:ucit_loraft}
    \begin{tabular}
    {b{4.2em}b{0.9em}b{0.9em}b{0.9em}b{0.9em}b{0.9em}b{0.9em}b{0.9em}}
    \toprule
         & \rotatebox{45}{ImgNet-R} &  \rotatebox{45}{ArxivQA} & \rotatebox{45}{VizWiz} & \rotatebox{45}{IconQA} & \rotatebox{45}{CLEVR} & \rotatebox{45}{Flickr30k} &  \\
    \midrule
          Transfer &  & \makebox[0.9em]{52.6}  & \makebox[0.9em]{18.3} & \makebox[0.9em]{6.0} & \makebox[0.9em]{17.0} & \makebox[0.9em]{40.3} & \cellcolor{transfer}\makebox[0.9em]{26.8}
  \\
    \midrule
          ImgNet-R & \makebox[0.9em]{{\cellcolor{diag}}91.7} & \makebox[0.9em]{52.6} & \makebox[0.9em]{23.5} & \makebox[0.9em]{11.8} & \makebox[0.9em]{17.2} & \makebox[0.9em]{36.5} \\
          ArxivQA & \makebox[0.9em]{90.5} & \makebox[0.9em]{{\cellcolor{diag}}92.1} & \makebox[0.9em]{13.1} & \makebox[0.9em]{2.1} & \makebox[0.9em]{14.2} & \makebox[0.9em]{21.5}\\
          VizWiz & \makebox[0.9em]{73.6} & \makebox[0.9em]{90.7} & \makebox[0.9em]{{\cellcolor{diag}}61.0} & \makebox[0.9em]{4.2} & \makebox[0.9em]{19.0} & \makebox[0.9em]{49.7}\\
          IconQA & \makebox[0.9em]{72.7} & \makebox[0.9em]{77.1} & \makebox[0.9em]{53.7} & \makebox[0.9em]{{\cellcolor{diag}}79.7} & \makebox[0.9em]{17.4} & \makebox[0.9em]{47.8}\\
          CLEVR & \makebox[0.9em]{68.8} & \makebox[0.9em]{77.4} & \makebox[0.9em]{52.3} & \makebox[0.9em]{67.8} & \makebox[0.9em]{{\cellcolor{diag}}77.9} & \makebox[0.9em]{46.1} \\
          Flickr30k & \makebox[0.9em]{58.6} & \makebox[0.9em]{76.7} & \makebox[0.9em]{45.7} & \makebox[0.9em]{67.4} & \makebox[0.9em]{61.6} & \makebox[0.9em]{\cellcolor{diag}58.0} & \makebox[0.9em]{{\cellcolor{last}}61.4}  \\
    \midrule
          Average & \makebox[0.9em]{76.0} & \makebox[0.9em]{77.8} & \makebox[0.9em]{41.6} & \makebox[0.9em]{38.8} & \makebox[0.9em]{34.6} & \makebox[0.9em]{43.3} & \makebox[0.9em]{{\cellcolor{average}}52.0}\\
    \bottomrule
    \end{tabular}
  \end{subtable}\hfill
  \begin{subtable}[t]{0.49\textwidth}
    \centering
    \caption{O-LoRA}
    \label{tab:ucit_olora}
    \begin{tabular}
    {b{4.2em}b{0.9em}b{0.9em}b{0.9em}b{0.9em}b{0.9em}b{0.9em}b{0.9em}}
    \toprule
          & \rotatebox{45}{ImgNet-R} &  \rotatebox{45}{ArxivQA} & \rotatebox{45}{VizWiz} & \rotatebox{45}{IconQA} & \rotatebox{45}{CLEVR} & \rotatebox{45}{Flickr30k} &  \\
    \midrule
          Transfer &  & \makebox[0.9em]{52.9}  & \makebox[0.9em]{19.6} & \makebox[0.9em]{4.4} & \makebox[0.9em]{16.9} & \makebox[0.9em]{41.0} & \makebox[0.9em]{{\cellcolor{transfer}}27.0}  \\
    \midrule
          ImgNet-R & \makebox[0.9em]{{\cellcolor{diag}}91.5} & \makebox[0.9em]{52.9} & \makebox[0.9em]{24.7} & \makebox[0.9em]{13.3} & \makebox[0.9em]{17.3} & \makebox[0.9em]{36.5} \\
          ArxivQA & \makebox[0.9em]{89.7} & \makebox[0.9em]{{\cellcolor{diag}}94.2} & \makebox[0.9em]{14.5} & \makebox[0.9em]{0.0} & \makebox[0.9em]{12.9} & \makebox[0.9em]{25.0}\\
          VizWiz & \makebox[0.9em]{80.9} & \makebox[0.9em]{91.7} & \makebox[0.9em]{{\cellcolor{diag}}59.8} & \makebox[0.9em]{0.0} & \makebox[0.9em]{19.6} & \makebox[0.9em]{49.0}\\
          IconQA & \makebox[0.9em]{80.2} & \makebox[0.9em]{80.3} & \makebox[0.9em]{54.5} & \makebox[0.9em]{{\cellcolor{diag}}75.9} & \makebox[0.9em]{17.6} & \makebox[0.9em]{48.6}\\
          CLEVR & \makebox[0.9em]{78.1} & \makebox[0.9em]{80.4} & \makebox[0.9em]{51.6} & \makebox[0.9em]{63.2} & \makebox[0.9em]{{\cellcolor{diag}}72.4} & \makebox[0.9em]{46.0} \\
          Flickr30k & \makebox[0.9em]{74.2} & \makebox[0.9em]{80.9} & \makebox[0.9em]{45.3} & \makebox[0.9em]{62.9} & \makebox[0.9em]{63.8} & \makebox[0.9em]{\cellcolor{diag}57.2} & \makebox[0.9em]{{\cellcolor{last}}64.1}  \\
    \midrule
          Average & \makebox[0.9em]{82.4} & \makebox[0.9em]{80.1} & \makebox[0.9em]{41.7} & \makebox[0.9em]{35.9} & \makebox[0.9em]{33.9} & \makebox[0.9em]{43.7} & \makebox[0.9em]{{\cellcolor{average}}53.0}\\
    \bottomrule
    \end{tabular}
  \end{subtable}

  \vspace{0.8\baselineskip}

  \begin{subtable}[t]{0.49\textwidth}
    \centering
    \caption{CL-MoE}
    \label{tab:ucit_clmoe}
    \begin{tabular}
    {b{4.2em}b{0.9em}b{0.9em}b{0.9em}b{0.9em}b{0.9em}b{0.9em}b{0.9em}}
    \toprule
          & \rotatebox{45}{ImgNet-R} &  \rotatebox{45}{ArxivQA} & \rotatebox{45}{VizWiz} & \rotatebox{45}{IconQA} & \rotatebox{45}{CLEVR} & \rotatebox{45}{Flickr30k} &   \\
    \midrule
          Transfer &  & \makebox[0.9em]{52.0}  & \makebox[0.9em]{19.3} & \makebox[0.9em]{7.4} & \makebox[0.9em]{17.8} & \makebox[0.9em]{41.3} & \makebox[0.9em]{{\cellcolor{transfer}}27.6}  \\
    \midrule
          ImgNet-R & \makebox[0.9em]{{\cellcolor{diag}}91.2} & \makebox[0.9em]{52.0} & \makebox[0.9em]{23.9} & \makebox[0.9em]{5.2} & \makebox[0.9em]{15.6} & \makebox[0.9em]{36.9} \\
          ArxivQA & \makebox[0.9em]{89.2} & \makebox[0.9em]{{\cellcolor{diag}}92.5} & \makebox[0.9em]{14.8} & \makebox[0.9em]{10.0} & \makebox[0.9em]{15.7} & \makebox[0.9em]{26.2}\\
          VizWiz & \makebox[0.9em]{77.2} & \makebox[0.9em]{90.7} & \makebox[0.9em]{{\cellcolor{diag}}60.4} & \makebox[0.9em]{6.9} & \makebox[0.9em]{20.6} & \makebox[0.9em]{49.5}\\
          IconQA & \makebox[0.9em]{79.5} & \makebox[0.9em]{76.2} & \makebox[0.9em]{51.0} & \makebox[0.9em]{{\cellcolor{diag}}54.7} & \makebox[0.9em]{19.4} & \makebox[0.9em]{47.9}\\
          CLEVR & \makebox[0.9em]{76.7} & \makebox[0.9em]{75.4} & \makebox[0.9em]{48.1} & \makebox[0.9em]{52.6} & \makebox[0.9em]{{\cellcolor{diag}}73.0} & \makebox[0.9em]{45.9} \\
          Flickr30k & \makebox[0.9em]{61.2} & \makebox[0.9em]{75.8} & \makebox[0.9em]{44.4} & \makebox[0.9em]{52.6} & \makebox[0.9em]{54.4} & \makebox[0.9em]{\cellcolor{diag}57.3} & \makebox[0.9em]{{\cellcolor{last}}58.6}  \\
    \midrule
          Average & \makebox[0.9em]{80.2} & \makebox[0.9em]{77.1} & \makebox[0.9em]{40.4} & \makebox[0.9em]{30.3} & \makebox[0.9em]{33.1} & \makebox[0.9em]{44.0} & \makebox[0.9em]{{\cellcolor{average}}50.9}\\
    \bottomrule
    \end{tabular}
  \end{subtable}\hfill
  \begin{subtable}[t]{0.49\textwidth}
    \centering
    \caption{SEFE}
    \label{tab:ucit_sefe}
    \begin{tabular}
    {b{4.2em}b{0.9em}b{0.9em}b{0.9em}b{0.9em}b{0.9em}b{0.9em}b{0.9em}}
    \toprule
          & \rotatebox{45}{ImgNet-R} &  \rotatebox{45}{ArxivQA} & \rotatebox{45}{VizWiz} & \rotatebox{45}{IconQA} & \rotatebox{45}{CLEVR} & \rotatebox{45}{Flickr30k} &   \\
    \midrule
          Transfer &  & \makebox[0.9em]{53.3}  & \makebox[0.9em]{18.7} & \makebox[0.9em]{7.5} & \makebox[0.9em]{17.0} & \makebox[0.9em]{40.9} & \makebox[0.9em]{{\cellcolor{transfer}}27.5}  \\
    \midrule
          ImgNet-R & \makebox[0.9em]{{\cellcolor{diag}}91.6} & \makebox[0.9em]{53.3} & \makebox[0.9em]{23.7} & \makebox[0.9em]{12.1} & \makebox[0.9em]{16.9} & \makebox[0.9em]{36.4} \\
          ArxivQA & \makebox[0.9em]{90.4} & \makebox[0.9em]{{\cellcolor{diag}}92.8} & \makebox[0.9em]{13.7} & \makebox[0.9em]{5.0} & \makebox[0.9em]{16.4} & \makebox[0.9em]{21.1}\\
          VizWiz & \makebox[0.9em]{83.6} & \makebox[0.9em]{89.3} & \makebox[0.9em]{{\cellcolor{diag}}61.4} & \makebox[0.9em]{5.3} & \makebox[0.9em]{18.6} & \makebox[0.9em]{49.8}\\
          IconQA & \makebox[0.9em]{84.3} & \makebox[0.9em]{78.1} & \makebox[0.9em]{57.4} & \makebox[0.9em]{{\cellcolor{diag}}79.6} & \makebox[0.9em]{16.2} & \makebox[0.9em]{50.6}\\
          CLEVR & \makebox[0.9em]{82.8} & \makebox[0.9em]{78.6} & \makebox[0.9em]{54.2} & \makebox[0.9em]{70.6} & \makebox[0.9em]{{\cellcolor{diag}}75.0} & \makebox[0.9em]{46.5} \\
          Flickr30k & \makebox[0.9em]{80.2} & \makebox[0.9em]{79.1} & \makebox[0.9em]{47.1} & \makebox[0.9em]{69.4} & \makebox[0.9em]{65.7} & \makebox[0.9em]{\cellcolor{diag}57.3} & \makebox[0.9em]{{\cellcolor{last}}66.5}  \\
    \midrule
          Average & \makebox[0.9em]{85.5} & \makebox[0.9em]{78.6} & \makebox[0.9em]{42.9} & \makebox[0.9em]{40.3} & \makebox[0.9em]{34.8} & \makebox[0.9em]{43.6} & \makebox[0.9em]{{\cellcolor{average}}54.3}\\
    \bottomrule
    \end{tabular}
  \end{subtable}

  \vspace{0.8\baselineskip}
  \begin{subtable}[t]{0.49\textwidth}
    \centering
    \caption{KeepLoRA}
    \label{tab:ucit_keeplora}
    \begin{tabular}
    {b{4.2em}b{0.9em}b{0.9em}b{0.9em}b{0.9em}b{0.9em}b{0.9em}b{0.9em}}
    \toprule
          & \rotatebox{45}{ImgNet-R} &  \rotatebox{45}{ArxivQA} & \rotatebox{45}{VizWiz} & \rotatebox{45}{IconQA} & \rotatebox{45}{CLEVR} & \rotatebox{45}{Flickr30k} &  \\
    \midrule
          Transfer &  & \makebox[0.9em]{52.8}  & \makebox[0.9em]{20.4} & \makebox[0.9em]{9.2} & \makebox[0.9em]{18.1} & \makebox[0.9em]{41.5} & \makebox[0.9em]{{\cellcolor{transfer}}28.4}  \\
    \midrule
          ImgNet-R & \makebox[0.9em]{{\cellcolor{diag}}91.5} & \makebox[0.9em]{52.8} & \makebox[0.9em]{25.6} & \makebox[0.9em]{13.4} & \makebox[0.9em]{17.1} & \makebox[0.9em]{36.7} \\
          ArxivQA & \makebox[0.9em]{90.4} & \makebox[0.9em]{{\cellcolor{diag}}94.5} & \makebox[0.9em]{15.2} & \makebox[0.9em]{4.0} & \makebox[0.9em]{17.2} & \makebox[0.9em]{21.5}\\
          VizWiz & \makebox[0.9em]{85.5} & \makebox[0.9em]{92.4} & \makebox[0.9em]{{\cellcolor{diag}}61.5} & \makebox[0.9em]{10.1} & \makebox[0.9em]{21.0} & \makebox[0.9em]{50.6}\\
          IconQA & \makebox[0.9em]{85.1} & \makebox[0.9em]{86.0} & \makebox[0.9em]{55.7} & \makebox[0.9em]{{\cellcolor{diag}}76.9} & \makebox[0.9em]{17.1} & \makebox[0.9em]{50.9}\\
          CLEVR & \makebox[0.9em]{84.1} & \makebox[0.9em]{89.3} & \makebox[0.9em]{51.5} & \makebox[0.9em]{68.3} & \makebox[0.9em]{{\cellcolor{diag}}72.6} & \makebox[0.9em]{47.8} \\
          Flickr30k & \makebox[0.9em]{82.4} & \makebox[0.9em]{86.7} & \makebox[0.9em]{46.6} & \makebox[0.9em]{67.8} & \makebox[0.9em]{66.4} & \makebox[0.9em]{\cellcolor{diag}57.2} & \makebox[0.9em]{{\cellcolor{last}}67.8}  \\
    \midrule
          Average & \makebox[0.9em]{86.5} & \makebox[0.9em]{83.6} & \makebox[0.9em]{42.7} & \makebox[0.9em]{40.1} & \makebox[0.9em]{35.2} & \makebox[0.9em]{44.1} & \makebox[0.9em]{{\cellcolor{average}}55.4}\\
    \bottomrule
    \end{tabular}
  \end{subtable}\hfill
  \begin{subtable}[t]{0.49\textwidth}
    \centering
    \caption{EBLoRA}
    \label{tab:ucit_eblora}
    \begin{tabular}
    {b{4.2em}b{0.9em}b{0.9em}b{0.9em}b{0.9em}b{0.9em}b{0.9em}b{0.9em}}
    \toprule
          & \rotatebox{45}{ImgNet-R} &  \rotatebox{45}{ArxivQA} & \rotatebox{45}{VizWiz} & \rotatebox{45}{IconQA} & \rotatebox{45}{CLEVR} & \rotatebox{45}{Flickr30k} &  \\
    \midrule
          Transfer &  & \makebox[0.9em]{49.2}  & \makebox[0.9em]{34.5} & \makebox[0.9em]{24.3} & \makebox[0.9em]{19.0} & \makebox[0.9em]{45.9} & \makebox[0.9em]{{\cellcolor{transfer}}34.6}  \\
    \midrule
          ImgNet-R & \makebox[0.9em]{{\cellcolor{diag}}90.5} & \makebox[0.9em]{49.2} & \makebox[0.9em]{34.6} & \makebox[0.9em]{21.6} & \makebox[0.9em]{19.8} & \makebox[0.9em]{36.8} \\
          ArxivQA & \makebox[0.9em]{90.4} & \makebox[0.9em]{{\cellcolor{diag}}94.5} & \makebox[0.9em]{34.3} & \makebox[0.9em]{22.6} & \makebox[0.9em]{19.5} & \makebox[0.9em]{36.3}\\
          VizWiz & \makebox[0.9em]{90.0} & \makebox[0.9em]{94.4} & \makebox[0.9em]{{\cellcolor{diag}}61.5} & \makebox[0.9em]{28.7} & \makebox[0.9em]{19.0} & \makebox[0.9em]{52.0}\\
          IconQA & \makebox[0.9em]{89.9} & \makebox[0.9em]{94.1} & \makebox[0.9em]{63.4} & \makebox[0.9em]{{\cellcolor{diag}}80.4} & \makebox[0.9em]{17.8} & \makebox[0.9em]{52.2}\\
          CLEVR & \makebox[0.9em]{89.7} & \makebox[0.9em]{94.2} & \makebox[0.9em]{62.8} & \makebox[0.9em]{75.1} & \makebox[0.9em]{{\cellcolor{diag}}66.9} & \makebox[0.9em]{52.3} \\
          Flickr30k & \makebox[0.9em]{89.6} & \makebox[0.9em]{94.2} & \makebox[0.9em]{56.6} & \makebox[0.9em]{75.2} & \makebox[0.9em]{66.2} & \makebox[0.9em]{\cellcolor{diag}55.3} & \makebox[0.9em]{{\cellcolor{last}}72.8}  \\
    \midrule
          Average & \makebox[0.9em]{90.0} & \makebox[0.9em]{86.8} & \makebox[0.9em]{52.2} & \makebox[0.9em]{50.6} & \makebox[0.9em]{34.9} & \makebox[0.9em]{47.5} & \makebox[0.9em]{{\cellcolor{average}}60.3}\\
    \bottomrule
    \end{tabular}
  \end{subtable}

\end{table*}

\begin{table*}[t]
  \centering
  \caption{Accuracy of LoRA-FT, O-LoRA, CL-MoE, SEFE, KeepLoRA and EBLoRA on the MLLM-DCL benchmark. Each row represents the performance on every dataset of the model trained after the corresponding task. \colorbox{transfer}{FWT}, \colorbox{average}{AVG}, and \colorbox{last}{MFN} metrics are shown.}
  \label{tab:methods_dcl}
  \captionsetup[subtable]{justification=centering} 
  
  \vspace{0.8\baselineskip}
  \begin{subtable}[t]{0.49\textwidth}
    \centering
    \caption{LoRA-FT}
    \label{tab:dcl_loraft}
    \begin{tabular}
    {b{4.4em}b{1.1em}b{1.1em}b{1.1em}b{1.1em}b{1.1em}b{1.1em}}
    \toprule
         & \rotatebox{45}{Sensing} &  \rotatebox{45}{Medical} & \rotatebox{45}{Driving} & \rotatebox{45}{Science} & \rotatebox{45}{Finance} &  \\
    \midrule
          Transfer & \makebox[1.1em] & \makebox[1.1em]{28.1}  & \makebox[1.1em]{17.4} & \makebox[1.1em]{34.0} & \makebox[1.1em]{50.2} & \makebox[1.1em]{{\cellcolor{transfer}}32.4}  \\
    \midrule
          Sensing & \makebox[1.1em]{{\cellcolor{diag}}78.8} & \makebox[1.1em]{28.1} & \makebox[1.1em]{17.3} & \makebox[1.1em]{34.8} & \makebox[1.1em]{55.6} \\
          Medical & \makebox[1.1em]{75.5} & \makebox[1.1em]{{\cellcolor{diag}}58.4} & \makebox[1.1em]{17.5} & \makebox[1.1em]{32.7} & \makebox[1.1em]{54.8} \\
          Driving & \makebox[1.1em]{70.0} & \makebox[1.1em]{47.5} & \makebox[1.1em]{{\cellcolor{diag}}52.3} & \makebox[1.1em]{34.6} & \makebox[1.1em]{40.9} \\
          Science & \makebox[1.1em]{73.2} & \makebox[1.1em]{46.4} & \makebox[1.1em]{40.6} & \makebox[1.1em]{{\cellcolor{diag}}50.4} & \makebox[1.1em]{49.5} \\
          Finance & \makebox[1.1em]{69.3} & \makebox[1.1em]{44.3} & \makebox[1.1em]{29.1} & \makebox[1.1em]{41.4} & \makebox[1.1em]{{\cellcolor{diag}}88.4} & \makebox[1.1em]{{\cellcolor{last}}54.5}  \\
    \midrule
          Average & \makebox[1.1em]{73.3} & \makebox[1.1em]{44.9} & \makebox[1.1em]{31.4} & \makebox[1.1em]{38.8} & \makebox[1.1em]{57.8} & \makebox[1.1em]{{\cellcolor{average}}49.3}\\
    \bottomrule
    \end{tabular}
  \end{subtable}\hfill
  \begin{subtable}[t]{0.49\textwidth}
    \centering
    \caption{O-LoRA}
    \label{tab:dcl_olora}
    \begin{tabular}
    {b{4.4em}b{1.1em}b{1.1em}b{1.1em}b{1.1em}b{1.1em}b{1.1em}}
    \toprule
         & \rotatebox{45}{Sensing} &  \rotatebox{45}{Medical} & \rotatebox{45}{Driving} & \rotatebox{45}{Science} & \rotatebox{45}{Finance} &  \\
    \midrule
          Transfer & \makebox[1.1em]{} & \makebox[1.1em]{28.4}  & \makebox[1.1em]{18.4} & \makebox[1.1em]{33.7} & \makebox[1.1em]{52.5} & \makebox[1.1em]{{\cellcolor{transfer}}33.3}  \\
    \midrule
          Sensing & \makebox[1.1em]{{\cellcolor{diag}}79.4} & \makebox[1.1em]{28.4} & \makebox[1.1em]{17.6} & \makebox[1.1em]{34.9} & \makebox[1.1em]{56.1} \\
          Medical & \makebox[1.1em]{74.3} & \makebox[1.1em]{{\cellcolor{diag}}58.5} & \makebox[1.1em]{19.2} & \makebox[1.1em]{33.2} & \makebox[1.1em]{56.0} \\
          Driving & \makebox[1.1em]{74.7} & \makebox[1.1em]{48.3} & \makebox[1.1em]{{\cellcolor{diag}}52.6} & \makebox[1.1em]{33.1} & \makebox[1.1em]{45.2} \\
          Science & \makebox[1.1em]{74.6} & \makebox[1.1em]{46.5} & \makebox[1.1em]{42.2} & \makebox[1.1em]{{\cellcolor{diag}}50.1} & \makebox[1.1em]{52.8} \\
          Finance & \makebox[1.1em]{72.3} & \makebox[1.1em]{46.9} & \makebox[1.1em]{31.6} & \makebox[1.1em]{41.5} & \makebox[1.1em]{{\cellcolor{diag}}88.1} & \makebox[1.1em]{{\cellcolor{last}}56.1}  \\
    \midrule
          Average & \makebox[1.1em]{75.0} & \makebox[1.1em]{45.7} & \makebox[1.1em]{32.6} & \makebox[1.1em]{38.5} & \makebox[1.1em]{59.6} & \makebox[1.1em]{{\cellcolor{average}}50.3} \\
    \bottomrule
    \end{tabular}
  \end{subtable}

  \vspace{0.8\baselineskip}

  \begin{subtable}[t]{0.49\textwidth}
    \centering
    \caption{CL-MoE}
    \label{tab:dcl_clmoe}
    \begin{tabular}
    {b{4.4em}b{1.1em}b{1.1em}b{1.1em}b{1.1em}b{1.1em}b{1.1em}}
    \toprule
         & \rotatebox{45}{Sensing} &  \rotatebox{45}{Medical} & \rotatebox{45}{Driving} & \rotatebox{45}{Science} & \rotatebox{45}{Finance} &  \\
    \midrule
          Transfer & \makebox[1.1em]{} & \makebox[1.1em]{28.3}  & \makebox[1.1em]{19.4} & \makebox[1.1em]{34.1} & \makebox[1.1em]{48.6} & \makebox[1.1em]{{\cellcolor{transfer}}32.6} \\
    \midrule
          Sensing & \makebox[1.1em]{{\cellcolor{diag}}79.4} & \makebox[1.1em]{28.3} & \makebox[1.1em]{18.7} & \makebox[1.1em]{35.2} & \makebox[1.1em]{56.4} \\
          Medical & \makebox[1.1em]{74.8} & \makebox[1.1em]{{\cellcolor{diag}}60.7} & \makebox[1.1em]{20.1} & \makebox[1.1em]{32.4} & \makebox[1.1em]{54.9} \\
          Driving & \makebox[1.1em]{74.0} & \makebox[1.1em]{44.3} & \makebox[1.1em]{{\cellcolor{diag}}52.1} & \makebox[1.1em]{34.7} & \makebox[1.1em]{39.6} \\
          Science & \makebox[1.1em]{71.0} & \makebox[1.1em]{47.4} & \makebox[1.1em]{40.0} & \makebox[1.1em]{{\cellcolor{diag}}50.7} & \makebox[1.1em]{43.3} \\
          Finance & \makebox[1.1em]{71.8} & \makebox[1.1em]{47.4} & \makebox[1.1em]{29.5} & \makebox[1.1em]{41.5} & \makebox[1.1em]{{\cellcolor{diag}}89.2} & \makebox[1.1em]{{\cellcolor{last}}55.9}  \\
    \midrule
          Average & \makebox[1.1em]{74.2} & \makebox[1.1em]{45.6} & \makebox[1.1em]{32.1} & \makebox[1.1em]{38.9} & \makebox[1.1em]{56.7} & \makebox[1.1em]{{\cellcolor{average}}49.5} \\
    \bottomrule
    \end{tabular}
  \end{subtable}\hfill
  \begin{subtable}[t]{0.49\textwidth}
    \centering
    \caption{SEFE}
    \label{tab:sefe}
    \begin{tabular}
    {b{4.4em}b{1.1em}b{1.1em}b{1.1em}b{1.1em}b{1.1em}b{1.1em}}
    \toprule
         & \rotatebox{45}{Sensing} &  \rotatebox{45}{Medical} & \rotatebox{45}{Driving} & \rotatebox{45}{Science} & \rotatebox{45}{Finance} &  \\
    \midrule
          Transfer & \makebox[1.1em]{} & \makebox[1.1em]{28.1}  & \makebox[1.1em]{19.6} & \makebox[1.1em]{33.9} & \makebox[1.1em]{52.4} & \makebox[1.1em]{{\cellcolor{transfer}}33.5}  \\
    \midrule
          Sensing & \makebox[1.1em]{{\cellcolor{diag}}78.8} & \makebox[1.1em]{28.1} & \makebox[1.1em]{18.6} & \makebox[1.1em]{35.1} & \makebox[1.1em]{56.2} \\
          Medical & \makebox[1.1em]{77.1} & \makebox[1.1em]{{\cellcolor{diag}}59.5} & \makebox[1.1em]{20.7} & \makebox[1.1em]{33.0} & \makebox[1.1em]{55.7} \\
          Driving & \makebox[1.1em]{77.8} & \makebox[1.1em]{51.6} & \makebox[1.1em]{{\cellcolor{diag}}52.5} & \makebox[1.1em]{33.5} & \makebox[1.1em]{47.4} \\
          Science & \makebox[1.1em]{77.9} & \makebox[1.1em]{48.4} & \makebox[1.1em]{44.7} & \makebox[1.1em]{{\cellcolor{diag}}50.4} & \makebox[1.1em]{50.1} \\
          Finance & \makebox[1.1em]{77.1} & \makebox[1.1em]{50.9} & \makebox[1.1em]{40.3} & \makebox[1.1em]{43.0} & \makebox[1.1em]{{\cellcolor{diag}}88.4} & \makebox[1.1em]{{\cellcolor{last}}59.9}  \\
    \midrule
          Average & \makebox[1.1em]{77.7} & \makebox[1.1em]{47.7} & \makebox[1.1em]{35.4} & \makebox[1.1em]{39.0} & \makebox[1.1em]{59.6} & \makebox[1.1em]{{\cellcolor{average}}51.9} \\
    \bottomrule
    \end{tabular}
  \end{subtable}

  \vspace{0.8\baselineskip}
  \begin{subtable}[t]{0.49\textwidth}
    \centering
    \caption{KeepLoRA}
    \label{tab:dcl_keeplora}
    \begin{tabular}
    {b{4.4em}b{1.1em}b{1.1em}b{1.1em}b{1.1em}b{1.1em}b{1.1em}}
    \toprule
         & \rotatebox{45}{Sensing} &  \rotatebox{45}{Medical} & \rotatebox{45}{Driving} & \rotatebox{45}{Science} & \rotatebox{45}{Finance} &  \\
    \midrule
          Transfer &  & \makebox[1.1em]{28.5}  & \makebox[1.1em]{16.6} & \makebox[1.1em]{34.1} & \makebox[1.1em]{55.6} &\makebox[1.1em]{\cellcolor{transfer}{33.7}}  \\
    \midrule   
          Sensing & \makebox[1.1em]{\cellcolor{diag}{80.0}} & \makebox[1.1em]{28.5} & \makebox[1.1em]{17.0} & \makebox[1.1em]{35.1} & \makebox[1.1em]{55.1} & \\
          Medical & \makebox[1.1em]{79.9} & \makebox[1.1em]{\cellcolor{diag}{58.6}} & \makebox[1.1em]{16.3} & \makebox[1.1em]{33.7} & \makebox[1.1em]{55.6} & \\
          Driving & \makebox[1.1em]{79.8} & \makebox[1.1em]{57.7} & \makebox[1.1em]{\cellcolor{diag}{53.1}} & \makebox[1.1em]{33.7} & \makebox[1.1em]{54.6} & \\
          Science & \makebox[1.1em]{79.2} & \makebox[1.1em]{54.9} & \makebox[1.1em]{51.1} & \makebox[1.1em]{\cellcolor{diag}{51.6}} & \makebox[1.1em]{57.2} &  \\
          Finance & \makebox[1.1em]{78.8} & \makebox[1.1em]{54.3} & \makebox[1.1em]{50.2} & \makebox[1.1em]{49.5} & \makebox[1.1em]{\cellcolor{diag}{89.3}} & \makebox[1.1em]{\cellcolor{last}{64.4}}  \\
    \midrule
          Average & \makebox[1.1em]{79.6} & \makebox[1.1em]{50.8} & \makebox[1.1em]{37.5} & \makebox[1.1em]{40.7} & \makebox[1.1em]{62.4} &\makebox[1.1em]{\cellcolor{average}{54.2}}\\
    \bottomrule
    \end{tabular}
  \end{subtable}\hfill
    \begin{subtable}[t]{0.49\textwidth}
    \centering
    \caption{EBLoRA}
    \label{tab:dcl_eblora}
    \begin{tabular}
    {b{4.4em}b{1.1em}b{1.1em}b{1.1em}b{1.1em}b{1.1em}b{1.1em}}
    \toprule
         & \rotatebox{45}{Sensing} &  \rotatebox{45}{Medical} & \rotatebox{45}{Driving} & \rotatebox{45}{Science} & \rotatebox{45}{Finance} &  \\
    \midrule
          Transfer & \makebox[1.1em]{} & \makebox[1.1em]{29.2}  & \makebox[1.1em]{19.3} & \makebox[1.1em]{34.3} & \makebox[1.1em]{54.6} & \makebox[1.1em]{{\cellcolor{transfer}}34.4}  \\
    \midrule
          Sensing & \makebox[1.1em]{{\cellcolor{diag}}80.7} & \makebox[1.1em]{29.2} & \makebox[1.1em]{19.3} & \makebox[1.1em]{34.5} & \makebox[1.1em]{57.5} \\
          Medical & \makebox[1.1em]{79.0} & \makebox[1.1em]{{\cellcolor{diag}}58.6} & \makebox[1.1em]{19.4} & \makebox[1.1em]{33.9} & \makebox[1.1em]{54.0} \\
          Driving & \makebox[1.1em]{79.3} & \makebox[1.1em]{57.9} & \makebox[1.1em]{{\cellcolor{diag}}54.1} & \makebox[1.1em]{34.7} & \makebox[1.1em]{55.5} \\
          Science & \makebox[1.1em]{78.4} & \makebox[1.1em]{58.1} & \makebox[1.1em]{54.0} & \makebox[1.1em]{{\cellcolor{diag}}53.0} & \makebox[1.1em]{51.5} \\
          Finance & \makebox[1.1em]{79.4} & \makebox[1.1em]{57.2} & \makebox[1.1em]{53.9} & \makebox[1.1em]{52.5} & \makebox[1.1em]{{\cellcolor{diag}}90.6} & \makebox[1.1em]{{\cellcolor{last}}66.7}  \\
    \midrule
          Average & \makebox[1.1em]{79.4} & \makebox[1.1em]{52.2} & \makebox[1.1em]{40.1} & \makebox[1.1em]{41.7} & \makebox[1.1em]{61.8} & \makebox[1.1em]{{\cellcolor{average}}55.0} \\
    \bottomrule
    \end{tabular}
  \end{subtable}

\end{table*}

\subsection*{Use of Large Language Models}
We use the large language model to polish text and check grammar. All outputs were reviewed by the authors, who take full responsibility for the final content.
